\definecolor{codegreen}{rgb}{0,0.6,0}
\definecolor{codegray}{rgb}{0.5,0.5,0.5}
\definecolor{codepurple}{rgb}{0.58,0,0.82}
\definecolor{backcolour}{rgb}{0.95,0.95,0.92}
\definecolor{dark-blue}{rgb}{0.15,0.15,0.4}
\definecolor{medium-blue}{rgb}{0,0,0.5}
\lstdefinestyle{mystyle}{
    backgroundcolor=\color{backcolour},   
    commentstyle=\color{codegreen},
    keywordstyle=\color{magenta},
    numberstyle=\tiny\color{codegray},
    stringstyle=\color{codepurple},
    basicstyle=\ttfamily\footnotesize,
    breakatwhitespace=false,         
    breaklines=true,                 
    captionpos=b,                    
    keepspaces=true,                 
    numbers=left,                    
    numbersep=5pt,                  
    showspaces=false,                
    showstringspaces=false,
    showtabs=false,                  
    tabsize=2
}
\newcommand{\params}{\theta}
\begin{document}

\title{On Uncertainty, Tempering, and Data Augmentation in Bayesian Classification}

\author{\name Sanyam Kapoor\thanks{Equal contribution.}, \addr New York University  \\
      \name Wesley J. Maddox$^*$, \addr New York University \\
      \name Pavel Izmailov$^*$, \addr New York University \\
      \name Andrew Gordon Wilson, \addr New York University}

\maketitle

\begin{abstract}
Aleatoric uncertainty captures the inherent randomness of the data, such as measurement noise. In Bayesian regression, we often use a Gaussian observation model, where we control the level of aleatoric uncertainty with a noise variance parameter. By contrast, for Bayesian classification we use a categorical distribution with no mechanism to represent our beliefs about aleatoric uncertainty. Our work shows that explicitly accounting for aleatoric uncertainty significantly improves the performance of Bayesian neural networks. We note that many standard benchmarks, such as CIFAR, have essentially no aleatoric uncertainty. Moreover, we show data augmentation in approximate inference has the effect of softening the likelihood, leading to underconfidence and profoundly misrepresenting our honest beliefs about aleatoric uncertainty. Accordingly, we find that a cold posterior, tempered by a power greater than one, often more honestly reflects our beliefs about aleatoric uncertainty than no tempering --- providing an explicit link between data augmentation and cold posteriors. We show that we can match or exceed the performance of posterior tempering by using a Dirichlet observation model, where we explicitly control the level of aleatoric uncertainty, without any need for tempering.
\end{abstract}

\section{Introduction}
\label{sec:intro}

Uncertainty is often compartmentalized into \emph{epistemic uncertainty} and \emph{aleatoric uncertainty} \citep{Hora1996AleatoryAE,Kendall2017WhatUD,Malinin2018PredictiveUE}. Epistemic uncertainty, sometimes called \emph{model uncertainty}, is the reducible uncertainty over which solution is correct given limited information. Bayesian methods naturally represent epistemic uncertainty through a distribution over model parameters, leading to a posterior distribution over functions that are consistent with data. As we observe more data, this posterior distribution concentrates around a single solution. Aleatoric uncertainty is intrinsic irreducible uncertainty, often representing measurement noise in regression, or mislabeled training points in classification \citep[e.g.][]{beyer2020we}. Although measurement noise can be reduced, for example, with better instrumentation, it is often a fixed property of the data we are given. Correctly expressing our assumptions about aleatoric uncertainty is crucial for achieving good predictive performance, both with Bayesian and non-Bayesian models 
\citep{Senge2014ReliableCL}.

In particular, our assumptions about aleatoric uncertainty profoundly affect predictive \emph{accuracy}, not only predictive uncertainty.\footnote{Epistemic uncertainty also has a significant effect on predictive accuracy, as discussed in \citet{wilson2020bayesian}.} In \cref{fig:conceptual}(a,b) we show the predictive distributions of two Gaussian process regression models \citep{rasmussen2006gaussian} trained on the same data and using the same RBF kernel function.
The only difference between these models is their assumptions about the aleatoric uncertainty.
The model in \cref{fig:conceptual}(a) assumes a high observation noise (each point is corrupted by $\mathcal{N}(0,\sigma^2 )$, with $\sigma^2 = 1$).
Consequently, this model explains many of the observations with noise --- such that the predictive mean does not closely fit the training datapoints.
The model in \cref{fig:conceptual}(b) assumes a low observation noise ($\sigma^2 = 10^{-2}$), and consequently the predictive mean runs through the training data, leading to very different predictions compared to the model in panel (a).

\begin{figure*}[!t]
\centering
    
\begin{tabular}{c cc cc}

\hspace{-0.4cm}\includegraphics[height=.277\linewidth]{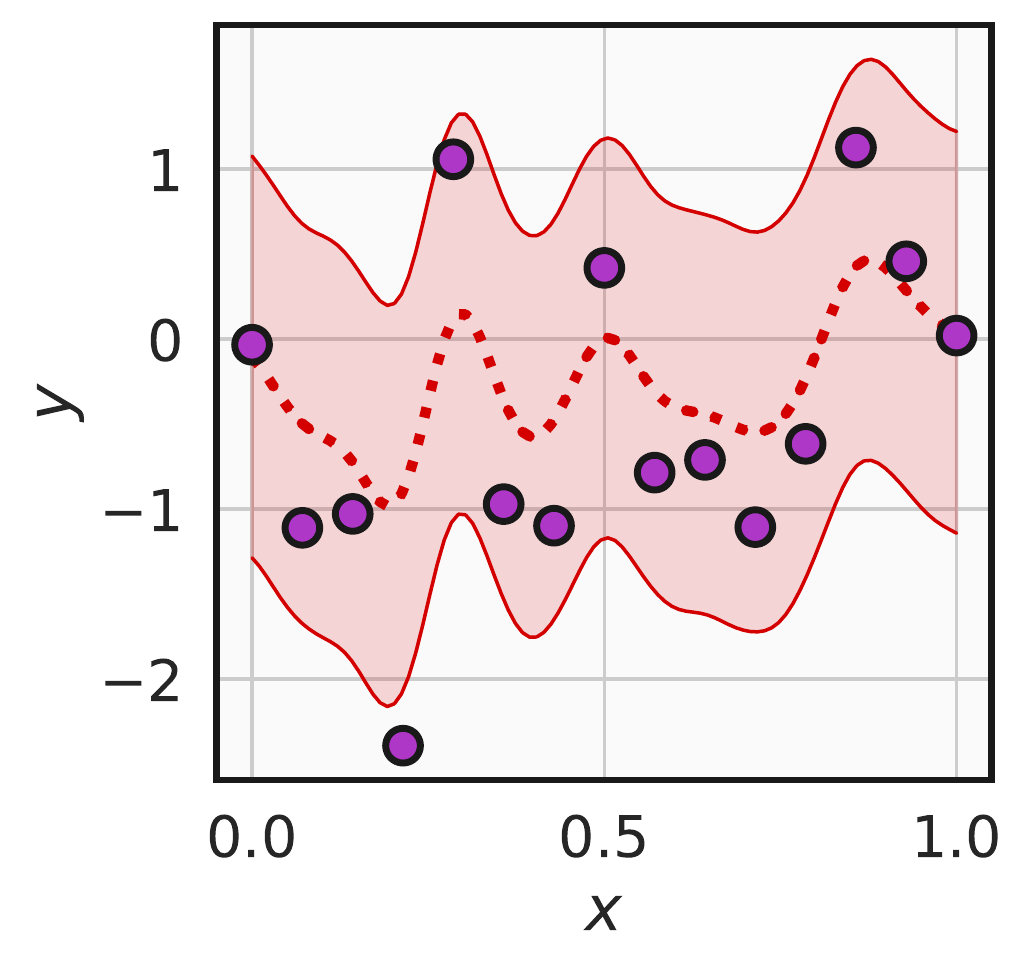}
&&
\hspace{-.7cm}\includegraphics[height=.277\linewidth]{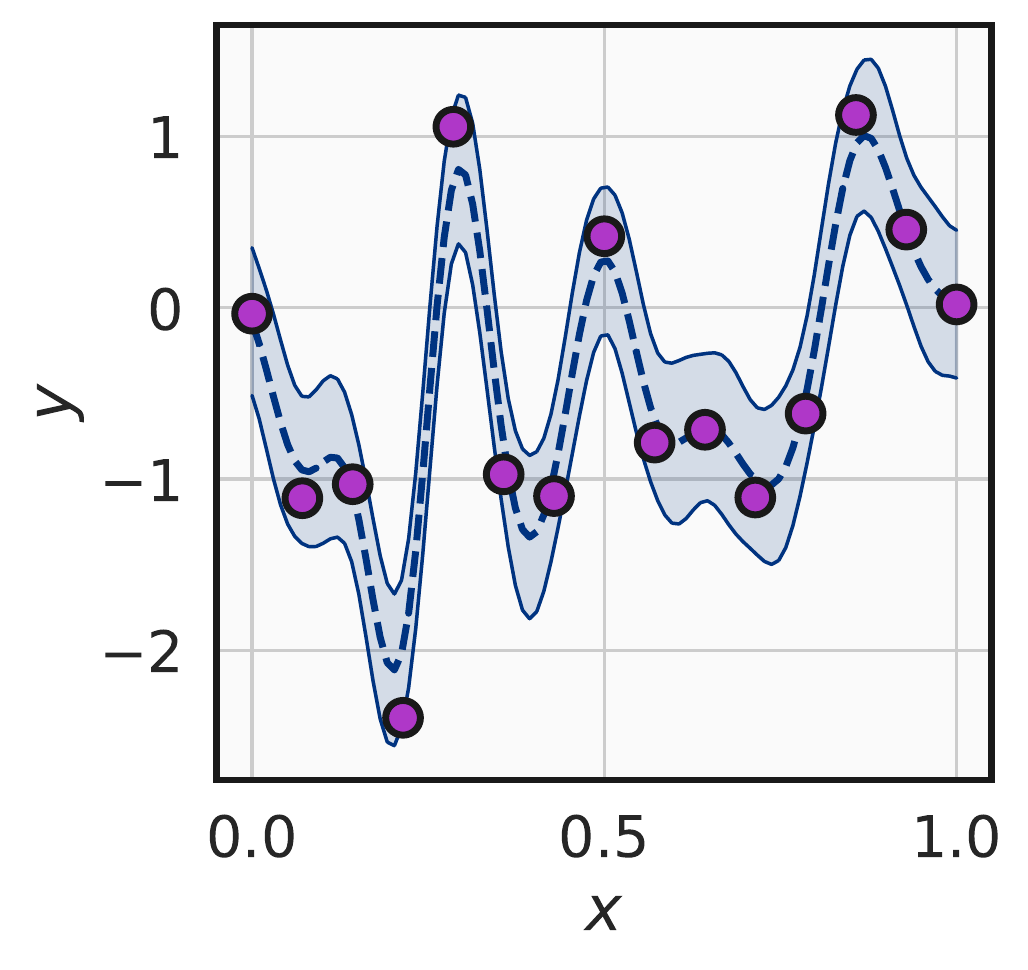}
&\quad\quad&
\includegraphics[height=.27\linewidth]{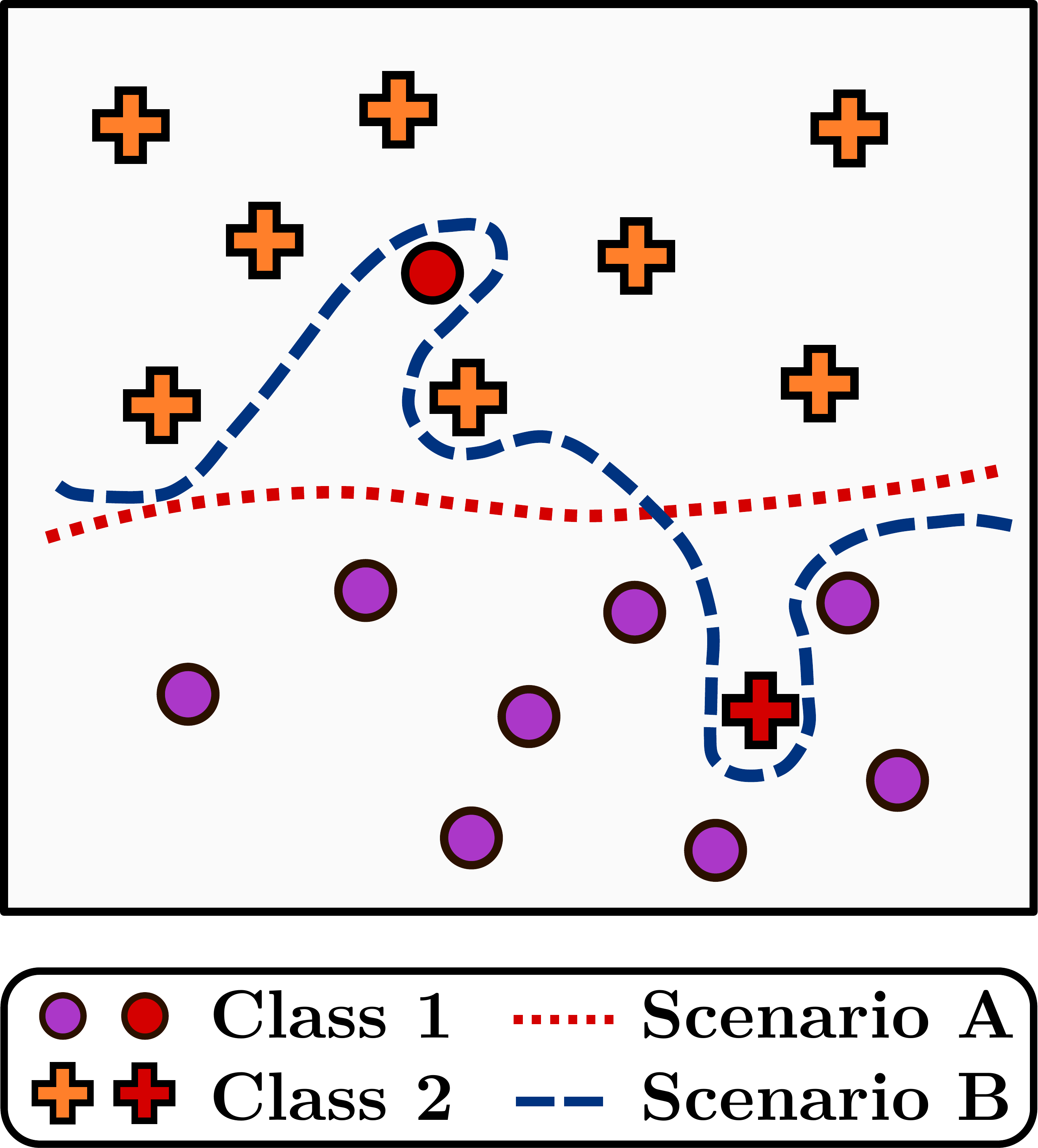}
\\[1mm]
(a) GP regression, $\sigma^2 = 1$ &&
\hspace{-.4cm}(b) GP regression, $\sigma^2 = 10^{-2}$&&
(c) Classification
\end{tabular}
\caption{
    \textbf{Effect of aleatoric uncertainty.}
    In regression problems, we can express our assumptions about aleatoric uncertainty by setting the observation noise parameter $\sigma^2$.
    \textbf{(a)}: Gaussian process regression with high observation noise $\sigma^2$ explains many of the observations with noise.
    Here the datapoints are shown with purple circles, the dashed line shows the predictive mean, and the shaded region shows one standard deviation of the predictive distribution. 
    \textbf{(b)}: Gaussian process regression on the same data, but with low observation noise $\sigma^2$ fits the training data nearly perfectly.
    \textbf{(c)}: In classification problems, we do not have a direct way to specify our assumptions about aleatoric uncertainty.
    In particular, we might use the same Bayesian neural network model if we know the data contains label noise (scenario A) and if we know that there is no label noise (scenario B), leading to poor performance in at least one of these scenarios.
}
\label{fig:conceptual}
\end{figure*}

Our assumptions about the aleatoric uncertainty are just as important in classification.
We illustrate this point in \cref{fig:conceptual}(c), where we fix a dataset and consider two possible scenarios.
\textbf{Scenario A}: suppose we know that the data contains label noise, such that some of the training examples are labeled incorrectly.
In this case, it is reasonable to believe that the data points shown in red in \cref{fig:conceptual}(c) are highly likely to be incorrectly labeled, and we should opt for a solution with a decision boundary shown with a dotted red line.
\textbf{Scenario B}: now, suppose that we know that the data are correctly labeled.
In this case, we want our model to perfectly classify all the training data and would prefer a solution shown by the dashed blue line.
Both of these solutions are reasonable descriptions of the data, and we can only make an informed choice between them by incorporating our assumptions about the aleatoric uncertainty.

While in regression problems we can easily express our assumptions about aleatoric uncertainty, such as through a noise variance parameter in a Gaussian likelihood, 
in classification we do not have a direct way to express these beliefs.
Indeed, in practice we might use the same Bayesian neural network model (with the same likelihood and prior) in both scenarios A and B above, necessarily leading to poor performance in at least one of these scenarios.

In this paper, we investigate ways of expressing our beliefs about aleatoric uncertainty in classification. In \cref{sec:aleatoric_uncertainty_understanding} we show that the standard softmax likelihood is equivalent to a Dirichlet likelihood over the predicted class probabilities, providing a mechanism to directly reason about aleatoric uncertainty in classification. Using this interpretation, we show that posterior tempering is a natural way of expressing beliefs about aleatoric uncertainty: a ``cold posterior'', raised to a power $1 / T$ with $T < 1$, corresponds to increasing the concentration parameter of the Dirichlet distribution for the predicted probability of the observed class, forcing the model to be confident on the train data. As an alternative, we consider a \emph{noisy Dirichlet model}, which can be viewed as an input dependent prior, to provide direct control over aleatoric uncertainty for Bayesian neural network classifiers. This approach achieves competitive performance on image classification problems with a valid likelihood and no need for tempering. In \cref{sec:data_aug_effects}, we show theoretically how data augmentation counterintuitively softens the likelihood, leading to solutions that underfit the training data. These results precisely characterize the empirical link between data augmentation and the success of cold posteriors observed by \citet{izmailov2021bayesian} and \citet{fortuin2021bayesian}, and show that tempering in fact serves to \emph{correct} for the effects of data augmentation on representing aleatoric uncertainty. In \cref{sec:experiments} we exemplify several of the conceptual findings in previous sections.

In short, we show how it is possible to explicitly characterize aleatoric uncertainty in Bayesian neural network classification, and how we can naturally accommodate data augmentation in a Bayesian setting. We find that cold posteriors in Bayesian neural networks more honestly reflect our beliefs about aleatoric uncertainty than $T=1$, as data augmentation softens the likelihood, and many standard benchmarks like CIFAR have essentially no aleatoric uncertainty for the observed training points. While \citet{wenzel2020good} note that cold posteriors can provide good performance in Bayesian neural network classifiers, we show that cold posteriors are not required --- simple modifications to the likelihood to reflect our honest beliefs about aleatoric uncertainty provide comparable performance.

\section{Related Work}
\label{sec:related}

Early work on Bayesian neural networks (BNNs) focused on hyperparameter learning and mitigating overfitting, using Laplace approximations, variational methods, and Hamiltonian Monte Carlo based MCMC \citep[e.g.][]{Mackay1992APB, hinton1993keeping, Neal1995BayesianLF}. More recently, \citet{wilson2020bayesian} show how Bayesian model averaging is especially compelling for \emph{modern} deep networks --- covering deep ensembles \citep{lakshminarayanan2017simple} as approximate Bayesian inference, induced priors in function space, mitigating double descent, generalization behaviour in deep learning \citep{zhang2021understanding}, posterior tempering \citep{grunwald2012safe, wenzel2020good}, and connections with loss surface structure such as mode connectivity \citep{garipov2018loss}. A number of recent works have focused on making Bayesian deep learning practical \citep[e.g.,][]{wilson2016deep, maddox2019simple, osawa2019practical, zhang2019cyclical, wilson2020bayesian, dusenberry2020efficient, daxberger2021laplace}, often with better results than classical training, and essentially no additional runtime overhead. BNNs also have an exciting range of applications, from astrophysics \citep{cranmer2021bayesian}, to click-through rate prediction \citep{liu2017pbodl}, to diagnosis of diabetic retinopathy \citep{filos2019systematic}, to fluid dynamics \citep{geneva2020modeling}.

\citet{wenzel2020good} demonstrated with several examples that raising the posterior to a power $1/T$, with $T<1$, in conjunction with SGLD inference, improves performance over classical training, but $T=1$ can provide worse results than classical training, in what has been termed the \emph{cold posterior effect}. However, in a study using HMC inference \citet{izmailov2021bayesian} showed that for \emph{all} cases considered in \citet{wenzel2020good} there is no cold posterior effect when data augmentation is removed. Indeed, while \citet{Noci2021DisentanglingTR} claims that data augmentation only explains the cold posterior effect for CIFAR-10 but not IMDB in \citet{wenzel2020good}, \citet{izmailov2021bayesian} in fact also shows no cold posterior effect on IMDB without data augmentation. The sufficiency of data augmentation to observe the cold posterior effect has since been confirmed by several works using SG-MCMC inference \citep{fortuin2021bayesian,Noci2021DisentanglingTR,nabarro2021data}. 

Several works have suggested that misspecified priors explain cold posteriors, arguing that the current default choice of isotropic Gaussian priors in BNNs are inadequate \citep{wenzel2020good,zeno2020cold,fortuin2021bayesian}. 
However, \citet{fortuin2021bayesian} find that the cold posterior effect cannot be alleviated for convolutional neural networks by using heavy tailed or correlated priors under data augmentation.
While \citet{fortuin2021bayesian} claim a cold posterior effect on Fashion MNIST for fully connected 
MLPs with Gaussian priors and no data augmentation, their experiments show very minimal effects (about $0.25\%$ on test accuracy on Fashion MNIST) that are generally not present 
in terms of calibration or out-of-distribution detection. Moreover, \citet{wilson2020bayesian} show that the experiments in \citet{wenzel2020good} suggesting a poor prior are easily resolved
by tuning the variance scale of the prior, and that isotropic Gaussian priors are practically effective and provide desirable properties in function space. \citet{izmailov2021bayesian} also show that 
standard Gaussian priors perform similarly to a range of other priors, such as heavy-tailed logistic priors, and mixture of Gaussian priors, and generally are high performing, providing better results
than standard training and deep ensembles with HMC inference.

Separately from cold posteriors, \citet{izmailov2021dangers} explain how standard priors for Bayesian neural networks can cause profound deteriorations in performance under covariate shift --- 
with the potential to impact virtually any real-world application of BNNs. They introduce the input dependent \emph{EmpCov} priors, which helps remedy this issue.

\citet{Noci2021DisentanglingTR} argue that many different factors --- likelihoods, priors, data augmentation --- can cause cold posteriors, and that there may be no one cause in general. 
These observations are actually aligned with the earlier work by \citet{wilson2020bayesian}, which argues that tempering can partially address a wide range of misspecifications, such that it
would be surprising if $T=1$ in general provided the best performance. In other words, if our model is at all misspecified we would expect $T=1$ to be suboptimal, and since in any realistic
setting our model will not be perfectly specified, despite our best efforts, it is unreasonable to demand that $T=1$ or be particularly alarmed if it is not.
\citet{Noci2021DisentanglingTR} further suggest that Gaussian priors may be putting weight on complicated hypotheses, which ends up hurting performance, while \citet{wilson2020bayesian} on the other hand demonstrate how Gaussian priors over parameters induce useful priors in function space. Either way, as above, 
the cold posterior effect is nearly removed in practice if we remove data augmentation, including all examples in \citet{wenzel2020good}. 

\citet{aitchison2020statistical} argue that BNN likelihoods are misspecified since many benchmark datasets have been carefully curated by human labelers; when we rely on a curation strategy that discards any samples which do not arrive at label consensus, our likelihood has the form ${p(y\mid x)^H}$ for $H$ human labelers, which connects to likelihood tempering. They also show that posterior tempering is less helpful in the presence of label noise. \citet{adlam2020cold} also consider model misspecification, showing a cold posterior effect can arise even with exact inference in Gaussian process regression when the aleatoric uncertainty is mis-estimated. \citet{nabarro2021data} modify the likelihood to accommodate data augmentation, using as the observation model the average of the neural network outputs over augmentations, but still find a cold posterior effect --- despite the lack of a cold posterior without augmentation. 

Our paper makes several distinctive contributions in the context of prior work: (1) we argue that standard likelihoods do not represent our beliefs about aleatoric uncertainty, and that standard benchmarks have essentially no aleatoric uncertainty; (2) we show how tempering and data augmentation specify aleatoric uncertainty, beyond curation; (3) we show the \emph{precise way in which data augmentation with SGLD leads to underconfidence in the likelihood}, a counterintuitive result which finally resolves the empirical connection between data augmentation and cold posteriors; (4) we show that a $T<1$, particularly with data augmentation, is a more honest reflection of our beliefs about aleatoric uncertainty than $T=1$; (5) we show how a lognormal approximation of the Dirichlet likelihood, originally used for tractable Gaussian process classification \citep{Milios2018DirichletbasedGP}, can naturally reflect our beliefs about aleatoric uncertainty, and for the first time remove the cold posterior effect in the presence of data augmentation; (6) we show that priors can also be used to specify our beliefs about aleatoric uncertainty. The code to reproduce experiments is available at \url{https://github.com/activatedgeek/bayesian-classification}.

\section{Background}
\label{sec:background}

\textbf{Bayesian Model Averaging.}\quad With Bayesian inference, we aim to infer the
posterior distribution over parameters having observed a 
dataset ${\dset = \{(x_i,y_i)\}_{i=1}^N}$ of input-output pairs, given by
${p(\params\mid \dset) \propto p(\dset \mid \params)p(\params)}$
for a given observation likelihood under the i.i.d. assumption ${p(\dset\mid\params) = \prod_{i=1}^N p(y_i\mid x_i, \params)}$, and prior over parameters $p(\params)$. For any novel input $x_\star$, we estimate the posterior predictive distribution via \emph{Bayesian model averaging} (BMA) given by
\begin{align}
\begin{split}
p(y_\star \mid x_\star)	&= \int p(y_\star \mid x_\star,\params)p(\params\mid \dset) d\params . 
\end{split} \label{eq:bma}
\end{align}
This integral cannot be expressed in closed form for a Bayesian neural network, and is typically approximated with Variational Inference (VI), the Laplace
approximation, or Markov Chain Monte Carlo (MCMC) \citep[e.g.,][]{wilson2020bayesian, pml2Book}. 

\textbf{Cold Posteriors and Tempering.}\quad Let ${p(\dset \mid \params)}$ denote the likelihood function, $p(\params)$ 
the prior over parameters of the neural network, and $U(\params)$ the \emph{posterior energy function}. Following \citet{wenzel2020good}, we then define a \emph{cold 
posterior} for $T<1$ as
\begin{align}
p_{\mathrm{cold}}(\params \mid \dset) \propto \exp \bigg\{ -\frac{1}{T} \underbrace{ \left( -\log{p(\dset \mid \params)} - \log{p(\params)} \right)}_{U(\params)} \bigg\}, \label{eq:cold_posterior}
\end{align}
which is effectively raises both the likelihood and the prior to a power $1/T$. $T=1$ recovers the standard Bayes' posterior. 
By comparison, a \emph{tempered likelihood posterior} (e.g. \citet{grunwald2012safe}) only raises the likelihood term to a power $1/T$ as
\begin{align}
p_{\mathrm{temp}}(\params \mid \dset) \propto \exp\bigg\{- \left(-\frac{1}{T}\log{p(\dset \mid \params)} - \log{p(\params)}\right)\bigg\}. \label{eq:tempered_posterior}
\end{align}

\textbf{Stochastic Gradient Langevin Dynamics (SGLD).}\quad For large-scale neural 
networks, exact posterior inference remains intractable. To approximate samples from the posterior ${p(\params \mid \dset)}$ over parameters of a neural network, we simulate the time-discretized Langevin stochastic differential equation (SDE) \citep{Srkk2019AppliedSD,welling2011bayesian}
\begin{align}
\begin{split}
\Delta \params &= \mbf{M}^{-1} \mbf{m} \epsilon, \\
\Delta \mbf{m} &= - \nabla_\params \widetilde{U}(\params) \epsilon - \gamma \mbf{m} \epsilon + \sqrt{2\gamma T} \eta, \textrm{ where } \eta \sim \gaussian{\mbf{0}, \mbf{M}},
\end{split} \label{eq:langevin_sde}
\end{align}
where $\mbf{m}$ are the auxiliary momentum variables, $\mbf{M}$ is
the mass matrix which acts as a preconditioner (often identity), $\gamma$ is the friction parameter, $T$ is the temperature, 
$\Delta t = \epsilon$ is the time discretization (step size),
and $\nabla_\params \widetilde{U}(\params)$ is an unbiased estimator of the gradient $\nabla_\params U(\params)$ using only a subset of the dataset $\dset$ for computational efficiency. 
For $\epsilon \to 0$ in the limit of time $t \to \infty$, simulating \cref{eq:langevin_sde} produces a trajectory 
distributed according to the stationary distribution 
${\exp\left\{ -U(\params)/T \right\}}$, which is exactly the posterior
$p_{\mathrm{cold}}$ in \cref{eq:cold_posterior} we desire.
When $\gamma = 0$, \cref{eq:langevin_sde} describes the Stochastic Gradient Langevin Dynamics (SGLD) \citep{welling2011bayesian}, and otherwise it describes the Stochastic Gradient Hamiltonian Monte Carlo (SGHMC) where $1-\gamma$ represents the momentum \citep{cheni14}. 
Furthermore, we can sample from \cref{eq:tempered_posterior} by setting $T = 1$ in \cref{eq:langevin_sde} and raising only the likelihood to a power $T$. In addition, it is often beneficial to use a cyclical time-stepping schedule for $\epsilon$ in \cref{eq:langevin_sde}, as proposed by \citet{zhang2019cyclical} for cyclical-SGLD (cSGLD) when $\gamma = 0$, and cyclical-SGHMC (cSGHMC) when $\gamma > 0$.

\textbf{Bayesian Classification.}\quad For a $C$-class classification problem, a standard choice of observation likelihood is the categorical distribution ${p(y \mid x,\params) = \mathrm{Cat}([\pi_1,\pi_2,\dots,\pi_C])}$, where each class probability is computed using a softmax transform of logits ${f(x;\params) \in \reals^C}$ as $\pi_c \propto \exp\{ [f(x; \params)]_c \}$, and hence called the \emph{softmax likelihood}. The negative log of the softmax likelihood, i.e. $-\log{p(y \mid x,\params)}$ is exactly the standard cross-entropy loss. The prior over parameters $p(\params)$ is often chosen to be an isotropic Gaussian $\gaussian{0, \sigma^2 \mbf{I}}$ with some fixed variance $\sigma^2$. Approximate posterior samples are often obtained with SGLD, and then used to approximate the Bayesian model average in \cref{eq:bma} using simple Monte Carlo.

\section{Aleatoric Uncertainty in Bayesian Classification}
\label{sec:aleatoric_uncertainty_understanding}

We will now discuss how we represent aleatoric uncertainty in classification (\cref{sec: howaleatoric}), how tempering reduces aleatoric uncertainty (\cref{sec: liktemp}), and how
we can explicitly represent aleatoric uncertainty without tempering in a modified Dirichlet observation model (\cref{sec: ndm}). In \cref{sec:data_aug_effects} we will use these 
foundations to show precisely how data augmentation affects our representation of aleatoric uncertainty.

\subsection{How do we represent aleatoric uncertainty in classification?}
\label{sec: howaleatoric}

Let us consider the Bayesian neural network posterior over parameters $w$ in a classification problem:
\begin{equation}
    \label{eq:posterior}
    p(w \mid D)
    \propto
    p(w) \prod_{x, y \in \mathcal D} f_y(x, w),
\end{equation}
where we denote the output of the softmax layer of the model corresponding to class $y$ on input $x$ as $f_y(x, w)$.
We can think of the class probability vectors $f(x) = (f_1(x, w), \ldots, f_C(x, w))$ as latent variables, where the prior distribution $p(w)$ over
the parameters of the network implies a joint prior distribution over $\{f(x)\}_{x \in \mathcal D}$.
We will initially focus on the observation model.
First, we note that we can introduce a uniform prior over the predicted class probabilities $f(x)$:
\begin{equation}
    \label{eq:adding_uniform_prior}
    p(y \mid f(x)) = f_y(x) \propto
    \text{Dir.}(1, \ldots, 1)(f(x)) \cdot f_y(x),
\end{equation}
where Dir. denotes the Dirichlet distribution and $\text{Dir.}(1, \ldots, 1)$ is a uniform distribution over the class probabilities $f(x)$ (with support on the unit $C$-simplex), and the proportionality is with respect to the latent variables $f(x)$.
We can view the right hand side of Eq. \eqref{eq:adding_uniform_prior} as the unnormalized posterior in a multinomial model, where we use a uniform prior over the the parameters $f(x)$, and observe a single count of class $y$.
Using the fact that the Dirichlet distribution is conjugate to the multinomial likelihood \citep[e.g.,][Ch. 2.2.1]{bishop06}, we can rewrite the right hand side of \cref{eq:adding_uniform_prior} as follows:
\begin{equation}
    \label{eq:likelihood_dirichlet}
    p(y \mid f(x)) \propto
    \text{Dir.}(1, \ldots, 1, \underbrace{2}_{\text{position}~y}, 1, \ldots, 1)(f(x)).
\end{equation}
Eq. \eqref{eq:likelihood_dirichlet} describes the distribution induced by the observation $y$ on the predicted class probabilities for the corresponding input $x$.
The posterior over the parameters $w$ of the network can then be written as a product of Eq. \eqref{eq:likelihood_dirichlet} and the prior $p(w)$:
\begin{equation}
    \label{eq:posterior_dir}
    p(w \mid D)
    \propto
    p(w) \prod_{x, y \in \mathcal D} \text{Dir.}(1, \ldots, 1, \underbrace{2}_{\text{position}~y}, 1, \ldots, 1)(f(x)).
\end{equation}
\cref{eq:posterior_dir} provides intuition for how Bayesian neural networks estimate aleatoric uncertainty in classification.
If we ignore the prior $p(w)$ and assume that the implied prior over $f(x)$ is uniform, then for an observation $(x, y)$ the posterior over $f(x)$ is Dir.$(1, \ldots, 2, \ldots, 1)$ and the posterior mean for the probability of the correct class $\mathbb E f_y(x)$ is $\frac 2 {C + 1}$.
For example, for a dataset with $100$ classes (e.g. CIFAR-100), the model on average will only be $2 / 101 \approx 2\%$ confident in the correct label on the \textit{training data}!

In practice, the prior $p(w)$ will imply a non-trivial joint prior distribution over the latent variables $\{f(x)\}_{x \in \mathcal D}$, so the actual posterior may be more (or less) confident than suggested by the analysis above.
Furthermore, we have very limited understanding of the implied distribution \citep[see][for empirical analysis]{wenzel2020good, wilson2020bayesian}.
In particular, most practitioners use simple $p(w) = \mathcal N(0, \alpha^2 I)$ priors, regardless of the amount of label noise in the data.
Explicitly constructing a prior in the parameter space that would lead to highly confident $f(x)$ is challenging, but we will show how modifying the likelihood 
can be viewed as providing an input-dependent prior that more fully reflects our beliefs about aleatoric uncertainty.

The expression in \cref{eq:posterior_dir} suggests two natural ways of modifying the posterior to account for the aleatoric uncertainty:
increasing the Dirichet concentration for the observed class (equivalent to posterior tempering) and decreasing the concentration for the unobserved classes (noisy Dirichlet model).
Below, we describe both of these approaches in detail.

\subsection{Likelihood tempering reduces aleatoric uncertainty}
\label{sec: liktemp}

The tempered likelihood posterior (see \cref{sec:background}) corresponding to the posterior in \cref{eq:posterior} in BNN classification can be written as
\begin{equation}
    \label{eq:temp_posterior}
    p_{temp}(w \mid D)
    \propto
    p(w) \prod_{x, y \in \mathcal D} f_y(x, w)^{1/T},
\end{equation}
where $T$ is the temperature.
Analogously to the derivation above, we can rewrite this posterior as 
\begin{equation}
    \label{eq:temp_posterior_dirichlet}
    p_{temp}(w \mid D)
    \propto
    p(w) \prod_{x, y \in \mathcal D} \text{Dir.}\bigg(1, \ldots, 1, \underbrace{1 + \frac 1 T}_{\text{position}~y}, 1, \ldots, 1\bigg)(f(x)).
\end{equation}
In other words, the tempered posterior corresponds to the same model as the regular posterior, but assumes that we observed $1 / T$ counts of class $y$ for each input $x$ in the training data.
In particular, assuming the prior $p(w)$ implies a uniform distribution over $f(x)$, the confidence in the correct labels on the train data under the tempered posterior is $\mathbb E f_y(x)$ is $\frac {1 + 1 / T} {C + 1 / T} = \frac{T + 1}{C T + 1}$.
For a dataset with $100$ classes and at temperature $T = 10^{-2}$ we get an average confidence of $50.5\%$, much higher than the $2\%$ for the standard observation model.

\textbf{Tempering the Likelihood vs Tempering the Posterior.}\quad
Prior work has mostly considered tempering the full Bayesian posterior in \cref{eq:cold_posterior} as opposed to just tempering the likelihood in \cref{eq:tempered_posterior}. 
In \cref{sec:cp_and_sharpness} we show that tempering the full Bayesian posterior is almost always equivalent to changing the prior distribution, and tempering the likelihood.
Moreover, we show that tempering just the likelihood recovers the same cold posterior effect \citep{wenzel2020good} as tempering the full posterior.

\textbf{How should we think about tempering?}\quad
Prior work has asserted that posterior tempering sharply deviates from the Bayesian paradigm and the cold posterior effect is highly problematic \citep{wenzel2020good,Noci2021DisentanglingTR,fortuin2021bayesian}.
We, on the other hand, argue that likelihood tempering is in fact a practical way to incorporate our assumptions about aleatoric uncertainty.
Relatedly, \citet{aitchison2020statistical} argue that the cold posterior effect can be caused by data curation.
For example, CIFAR-$10$ and ImageNet datasets have very relatively little label noise and are carefully curated \citep[e.g.][]{pleiss2020identifying}.
Thus, we may
expect that tempered likelihoods with low temperatures will lead to optimal performance.
\citet{wilson2020bayesian} also argue that posterior tempering can be viewed as a change of the observation model.

\textbf{Is Tempered Likelihood a Valid Likelihood?}\quad
In classical Bayesian inference, the observation model is a distribution $p(y \mid x)$ over the labels conditioned on the input.
\citet{wenzel2020good} argue that the tempered softmax likelihood is in general not a valid likelihood because 
it does not sum to $1$ over classes.
However, \citet{wenzel2020good} show, the tempered softmax likelihood with $T < 1$ can be interpreted as a valid likelihood if we introduce a new class, which is not observed in the training data. While
they discard this interpretation as incoherent, it is not necessarily unreasonable to include an unobserved class in the model.
Moreover, from \cref{eq:temp_posterior_dirichlet}, we can naturally interpret the tempered likelihood as using the multinomial observation model, assuming $1 / T$ counts of the label are observed for each of the training datapoints, which is uncontroversial and perfectly valid.

\subsection{Noisy Dirichlet model: changing the prior over class probabilities}
\label{sec: ndm}

As we have seen in \cref{eq:temp_posterior_dirichlet}, likelihood tempering increases the posterior confidence by increasing the Dirichlet distribution concentration parameter for the observed class $y$ from $2$ to $1 + 1/T$. 
We can achieve a similar effect by \textit{decreasing the concentration of the unobserved classes} instead\footnote{
While for concreteness we discuss increasing the confidence of the model here, we can equivalently decrease the confidence of the model by \textit{increasing} the concentration parameters for the unobserved classes, if we expect the level of label noise to be high.}!
Indeed, consider the distribution
\begin{equation}
    \label{eq:posterior_dir_noisy}
    p_{ND}(w \mid D)
    \propto
    p(w) \prod_{x, y \in \mathcal D} \text{Dir.}(\alpha_{\epsilon}, \ldots, \alpha_{\epsilon}, \underbrace{\alpha_{\epsilon}+1}_{\text{position}~y}, \alpha_{\epsilon}, \ldots, \alpha_{\epsilon})(f(x)),
\end{equation}
where ND in $p_{ND}$ stands for \textit{Noisy Dirichlet} and $\alpha_{\epsilon}$ is a tunable parameter.
The noisy Dirichlet model was originally proposed by \citet{Milios2018DirichletbasedGP} in the context of Gaussian process classification,
where they designed a tractable approximation to this model.
Using our running example, if $p(w)$ induces a uniform distribution over $f(x)$, for a problem with $100$ classes and $\alpha_{\epsilon} = 10^{-2}$ we have expected confidence 
$\mathbb E f_y(x) = \frac {\alpha_\epsilon + 1} {C\alpha_\epsilon + 1} = 50.5\%$, which is the same as for the tempered likelihood with $T=10^{-2}$.

Now, again using the conjugacy of the Dirichlet and multinomial distributions, we can rewrite $p_{ND}$ as follows:
\begin{equation}
    \label{eq:posterior_dir_noisy_final}
    p_{ND}(w \mid D)
    \propto
    \overbrace{p(w) \prod_{x, y \in \mathcal D} \text{Dir.}(\alpha_{\epsilon}, \ldots, \alpha_{\epsilon})(f(x))}^{q_{ND}(w)} \cdot f_y(x) = q_{ND}(w) f_y(w).
\end{equation}
We can interpret $q_{ND}(w) = p(w) \cdot \prod_{x \in \mathcal D} \text{Dir.}(\alpha_{\epsilon}, \ldots, \alpha_{\epsilon})(f(x))$ as a prior over the parameters $w$ of the model.
Indeed, $q_{ND}(w)$ does not depend on the labels $y$, and simply forces the predicted class probabilities to be confident in any one of the classes for the training data.
See the \textit{EmpCov} prior in \citet{izmailov2021dangers} for another example of a prior that depends on the training inputs but not training labels.

The Noisy Dirichlet prior is intuitively appealing: in many practical settings a priori we believe that the aleatoric uncertainty on the training data is low, and the model should be confident in one of the classes.
At the same time, we would not want to enforce low aleatoric uncertainty everywhere in the input space.
Indeed, while we believe that the aleatoric uncertainty should be low on the training data, we do not expect that all the possible inputs to the model should be classified with high confidence.

In short, the Noisy Dirichlet model corresponds to using a \textit{valid likelihood} --- the standard softmax likelihood --- with a prior that explicitly enforces the model predictions to be confident on the training data.
In \cref{sec:exp_bnn_images} we will see that the noisy Dirichlet model removes the cold posterior effect: tempering is not needed to achieve strong performance.

\textbf{Gaussian Approximation.}
\quad
\citet{Milios2018DirichletbasedGP} considered the distribution in \cref{eq:posterior_dir_noisy_final} in the context of Gaussian process classification,
but with a different goal:
they aimed to create a regression likelihood which would approximate the softmax likelihood.
They further approximated the Dirichlet distribution $\text{Dir.}(f(x))$ over the class probabilities $f(x)$ with a product of independent Gaussian distributions over the logits $z(x)$:
\begin{equation}
\begin{split}
    \label{eq:posterior_dir_noisy_gaussian}
    p_{NDG}(w \mid D)
    \propto
    p(w) \prod_{x, y \in \mathcal D} \prod_{c=1}^C \mathcal N (z_c(x) \mid \mu_c, \sigma_c^2),~~\text{with} \\
    \alpha_c = 1 + \alpha_{\epsilon} \cdot I[c = y],~~
    \sigma_c^2 = \log(1 / \alpha_c + 1),~~
    \mu_c = \log(\alpha_c) - \frac{\sigma^2_c}{2},
\end{split}
\end{equation}
where $I[c = y]$ is the indicator function equal to $1$ for $c = y$ and $0$ for $c \ne y$.
Here \emph{NDG} stands for \emph{Noisy Dirichlet Gaussian} approximation.
While theoretically we do not need to use the Gaussian approximation in \cref{eq:posterior_dir_noisy_gaussian} for Bayesian neural networks and can directly use \cref{eq:posterior_dir_noisy_final}, we found that in practice the approximation is much more stable numerically.
Indeed, the approximation in \cref{eq:posterior_dir_noisy_gaussian} amounts to solving a regression problem in the space of logits $z(x)$ with a Gaussian observation model.
In the experiments in this paper we will use the $p_{NDG}$ model.

\begin{figure}[!t]
    \centering
    \includegraphics[width=0.8\linewidth]{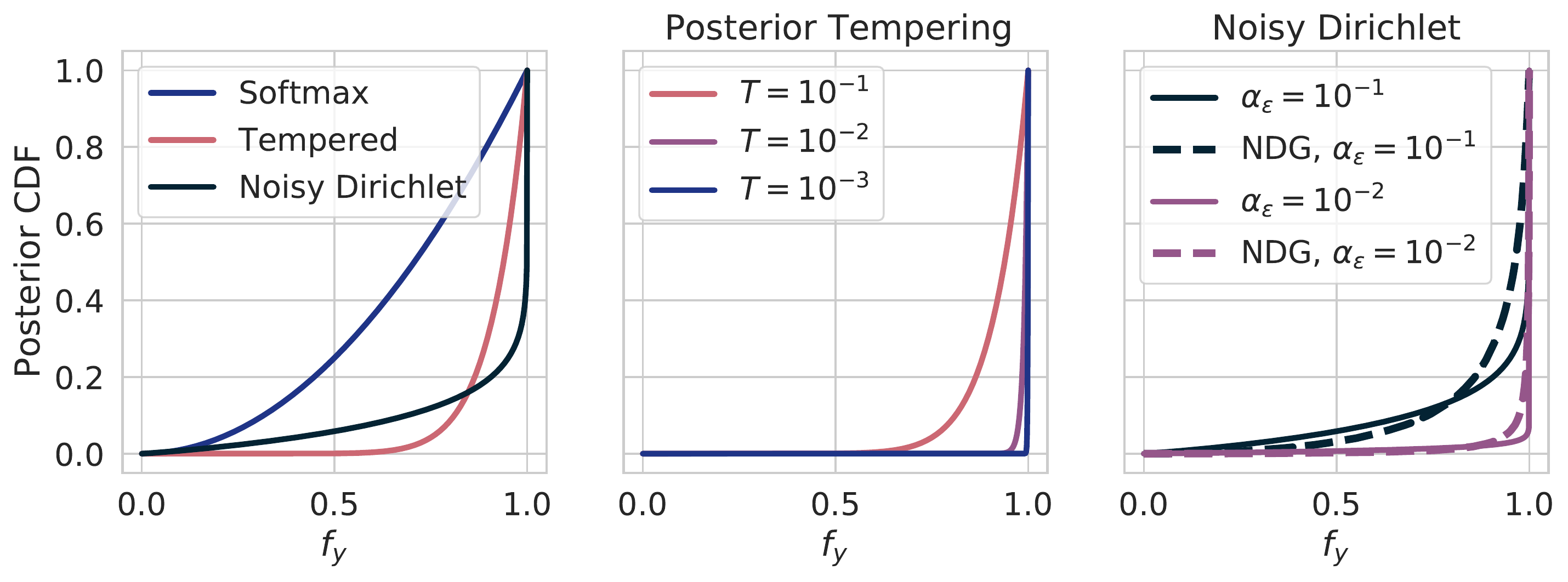}
    \caption{
    \textbf{Comparison of Bayesian classification models.}
    The probability CDF of the posterior over the confidence $f_y$ for the correct class $y$ with the standard cross entropy likelihood,
    tempered likelihood and the noisy Dirichlet model in a binary classification problem.
    \textbf{Left}: Tempering and noisy Dirichlet models both allow to concentrate the posterior on solutions that are confident in the correct label.
    \textbf{Middle}: Lower likelihood temperatures lead to higher concentration on confident solutions.
    \textbf{Right}: In the noisy Dirichlet model, lower values of the noise parameter $\alpha_{\epsilon}$ also lead to higher confidence.
    }
    \label{fig:coinflip}
\end{figure}

\textbf{Visual comparison.}\quad
In \cref{fig:coinflip} (left), we show the CDF of the posterior distribution over the confidence $f_y$ for the correct class $y$ with the standard softmax likelihood, tempered likelihood and the noisy Dirichlet model for a binary classification problem.
Here, we assume that the prior over $\{f(x)\}_{x \in \mathcal D}$ is uniform and independent across $x$ for the purpose of visualization.
Both tempering and the noisy Dirichlet model indeed push the predictions $f_y$ to be much more confident in the correct class $y$ compared to the standard softmax likelihood.
In \cref{fig:coinflip} (middle) we show the effect of the value of temperature: lower temperatures correspond to more confident predictions.
Similarly, in \cref{fig:coinflip} (right) lower values of $\alpha_\epsilon$ lead to more confident predictions in the noisy Dirichlet model.
The Gaussian approximation NDG matches the CDF of the noisy Dirichlet model well, but smooths it slightly, making it more amenable to numerical sampling.

\section{The Effect of Data Augmentation}
\label{sec:data_aug_effects}

Data augmentation is a key ingredient of modern deep learning.
Surprisingly, however, data augmentation has not been considered in detail in the context of Bayesian methods until recently.
Several works have shown that naive data augmentation can often cause the cold posterior effect in Bayesian deep learning \citep{izmailov2021bayesian,zeno2020cold,fortuin2021bayesian, nabarro2021data}.
We show that naive data augmentation, as typically applied in practice, is closely connected to likelihood tempering with a temperature $T > 1$.
In this case, the cold posteriors counterbalance the effect of data augmentation.

Data augmentation is a practical procedure, where at each iteration of optimization or sampling, we draw random augmentation for each object in each mini-batch of the data.
Consequently, we can only reason about the effect of the data augmentation on the limiting distributions of approximate inference procedures:
data augmentation is not defined for \textit{true} Bayesian neural networks.

Consider  the interaction between data augmenation and SGLD (or, equivalently, any other SGMCMC method). In SGLD, we aim to construct an unbiased estimator of the full gradient required in \cref{eq:langevin_sde}. 
In prior work, e.g. in \citet{wenzel2020good}, 
the stochastic gradient is estimated using a randomly augmented mini-batch of the data.
For a minibatch of
the full dataset $\dset_m = \{x_i,y_i\}_{i=1}^m \subset \dset$, and a finite set of augmentations $\mathcal{T} = \{ t_1,\dots,t_K \}$, 
this stochastic gradient is given by
\begin{align}
\begin{split}
\nabla \widetilde{U}(\params) =&~ \frac{N}{m} \sum_{(x_i,y_i) \in \dset_m} \nabla_\params \log{p(y_i \mid t_j(x_i))}
+ \nabla_\params \log{p(\params)},
\end{split}\label{eq:stochastic_gradient}
\end{align}
where the transformations $t_j$ are sampled uniformly from $\mathcal{T}$.

There are two sources of randomness in $\nabla \widetilde{U}(\params)$ --- the choice of the mini-batch $\dset_m$ and the choice of augmentation $t_j$ used for each $x_i$. The limiting distribution of SGLD is determined by the expectation of the stochastic gradient in \cref{eq:stochastic_gradient}, which is given by
\begin{align}
\begin{split}
\mathbb{E}[\nabla \widetilde{U}(\params)] &= \sum_{i=1}^N \mathbb{E}_{t_j \sim \mathcal{T}}[\nabla_\params \log{p(y_i \mid t_j(x_i))}] + \nabla_\params \log{p(\params)}, \\
&= \sum_{i=1}^N \sum_{j=1}^K [\nabla_\params \log{p(y_i \mid t_j(x_i))^{1/K}}] + \nabla_\params \log{p(\params)},
\label{eq:aug_sgld_grad_expectation}
\end{split}
\end{align}
where $t_j \sim \mathcal{T}$ are augmentations sampled uniformly from $\mathcal{T}$.
Therefore, we conclude that the limiting distribution of SGLD under data augmentation is given by 
\begin{align}
p_{\mathrm{aug}}(\params\mid \dset) \propto p(\params)\prod_{i=1}^N\prod_{j=1}^K p(y_i \mid t_j(x_i))^{1/K}.
\label{eq:implied_posterior}
\end{align}

We can interpret the limiting distribution in \cref{eq:implied_posterior} as a \emph{tempered likelihood posterior} for a new dataset $\dset^\prime = \{(t_j(x_i), y_i)\}_{i, j}$ which contains all augmentations of every data point from the original dataset $\dset$.
Furthermore, the likelihood is tempered with a temperature $K > 1$, corresponding to a \emph{warm posterior}.
In other words, the limiting distribution of SGLD with data augmentation corresponds to a posterior over the augmented dataset with a \emph{softened} likelihood, leading to less confidence about the labels.
By applying a cold temperature $T = 1 / K$ to the posterior in \cref{eq:implied_posterior}, we can recover the standard Bayesian posterior on the augmented dataset $\dset^\prime$.

In \cref{sec:exp_bnn_images}, we explore the effect of data augmentation on aleatoric uncertainty in practice and show that it does indeed soften the likelihood and lead to increased aleatoric uncertainty.

\textbf{What about Other Inference Procedures?}\quad
While the derivation in \cref{eq:implied_posterior} comes directly from studying the stochastic gradient evaluated in SGMCMC, we can equivalently derive the same posterior distribution in variational inference as we show in \cref{app:alt_inferences}. 

\textbf{Should we always use temperature $T=1/K$?}\quad
While using the temperature $T = 1 / K$ recovers the standard Bayesian posterior on the augmented dataset $\dset'$, it is not necessarily a correct approach to modeling data augmentation.
Indeed, the standard posterior  $p(\params \mid \dset') \propto p(\params)\prod_{i=1}^N\prod_{j=1}^K p(y_i \mid t_j(x_i))$ assumes indepdendence across both augmentations and data points.
In practice, however, we share the same label $y_i$ across all the augmented versions of the image; treating the observations as independent then leads to \textit{underestimating} the aleatoric uncertainty.
Indeed, consider the extreme scenario, where all the augmentations $t_j(x)$ simply return $x$.
In this case, treating the observations $(t_j(x), y)$ as independent will simply raise the likelihood to a power of $K$, while in reality we have not received any additional information from the augmentation policy.
Correcting the likelihood by a factor of $1/K$ is the correct approach only when the predictions on $t_j(x)$ are completely independent from each other (see \cref{sec:data_aug_gp}).
See also \citet{nabarro2021data} for a related discussion.

\textbf{A Proper Augmentation Likelihood.}\quad
As we have noted, simple tempering does not model the augmented data perfectly, as it ignores the dependencies between the observations. 
In \cref{sec:app_aug_lik}, we develop a proper likelihood that correctly accounts for data augmentation. 
In our experiments, however, we found that simply sharpening the likelihood via tempering or the noisy Dirichlet model is sufficient, and the proper augmentation likelihood does not provide significant benefits in practice. 
In particular, we found the proper augmentation likelihood much harder to sample from with standard SGMCMC samplers, compared to the standard likelihood.
We leave a detailed exploration of the augmentation likelihood for future work.
\citet{nabarro2021data} also derived a valid likelihood for data augmentation: in their model, the predictions are averaged across all data augmentations of a given datapoint.

\section{Experiments}
\label{sec:experiments}

In this section we provide empirical support for the observations presented in \cref{sec:aleatoric_uncertainty_understanding,sec:data_aug_effects}.
First, in \cref{sec:exp_synthetic} we illustrate the effects of tempering, noisy Dirichlet model and data augmentation using a Bayesian neural network on a synthetic 2-D classification problem.
Next, in \cref{sec:data_aug_gp} we visualize the effect of data augmentation on the limiting distribution of SGMCMC using a Gaussian process regression model.
Finally, in \cref{sec:exp_bnn_images} we report the results for BNNs on image classification problems.

\begin{figure*}[!t]
	\centering
	\begin{subfigure}{0.24\linewidth}
		\captionsetup{font=small}
		\includegraphics[height=1.1\textwidth]{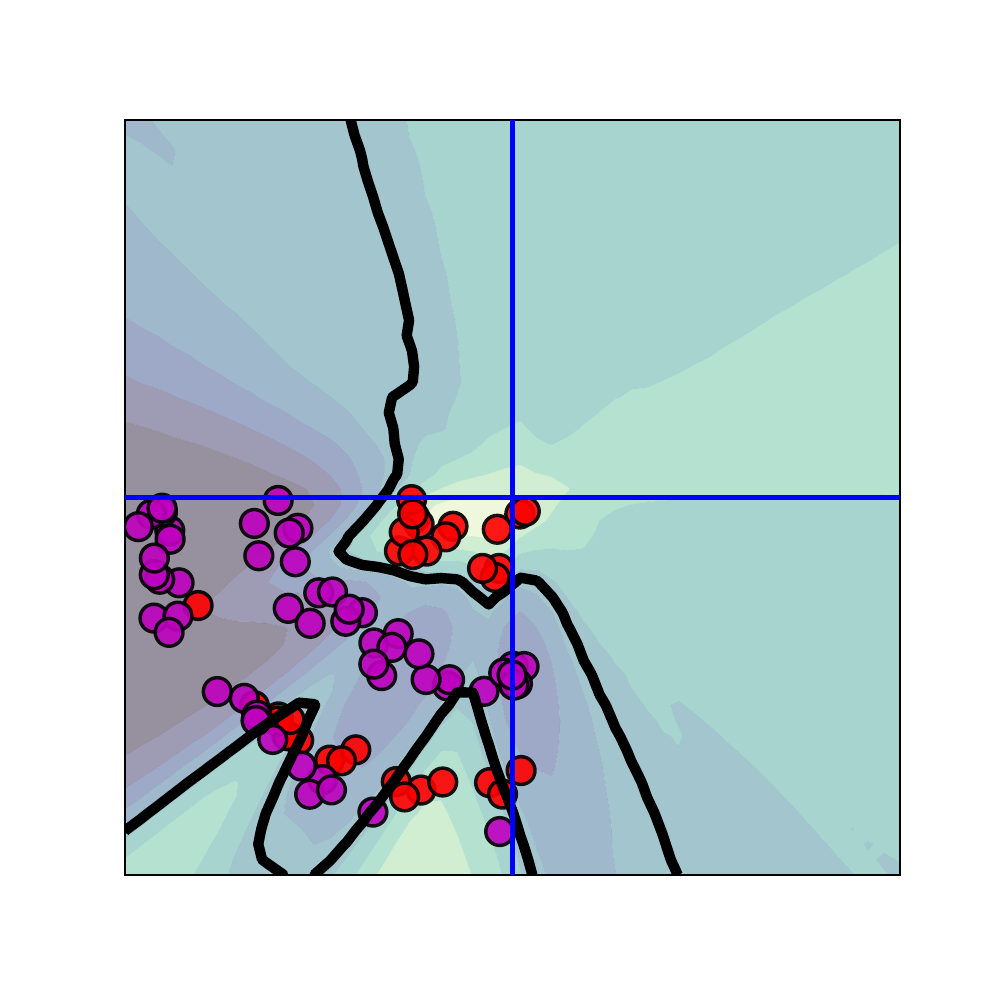}
		\caption{Softmax Lik. (SL)}
		\label{fig:da_crossent}
	\end{subfigure}
	\begin{subfigure}{0.24\linewidth}
		\captionsetup{font=small}
		\includegraphics[height=1.1\textwidth]{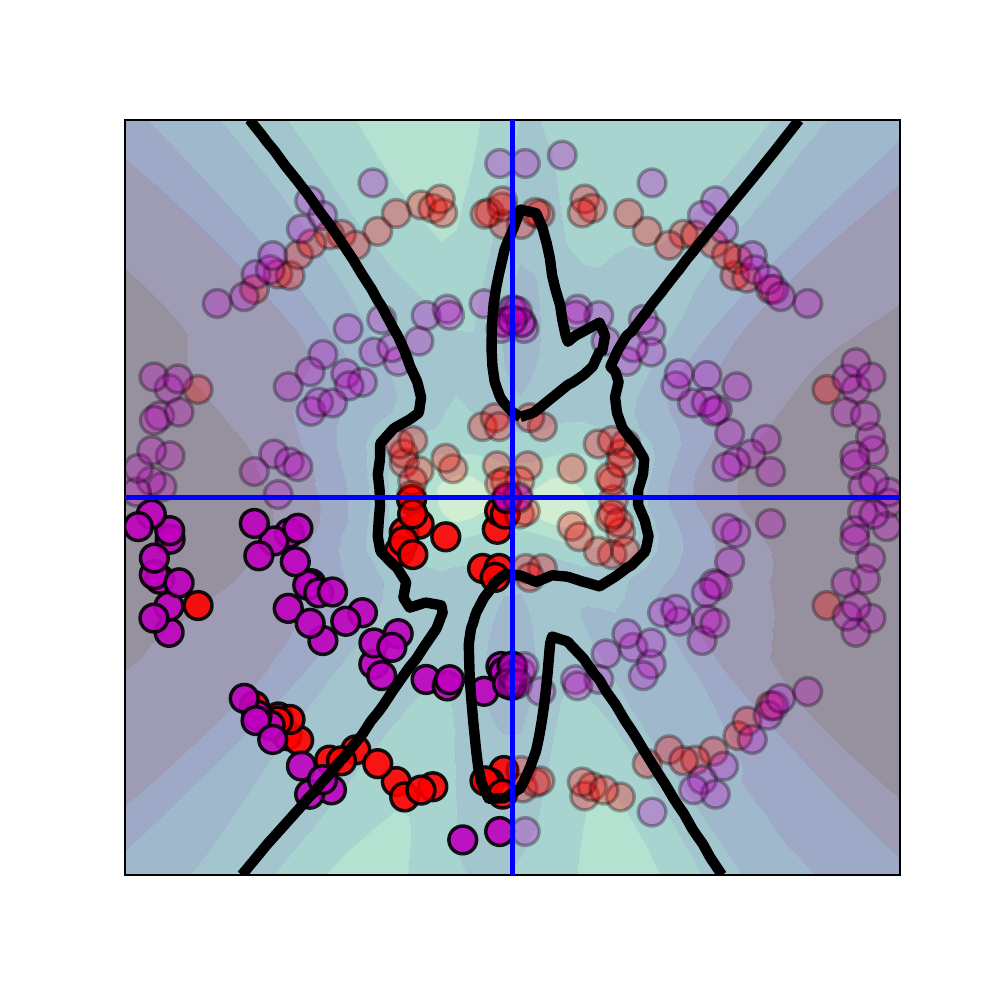}
		\caption{SL+Data Aug.(DA)}
		\label{fig:da_crossent_da}
	\end{subfigure}
	\begin{subfigure}{0.24\linewidth}
		\captionsetup{font=small}
		\includegraphics[height=1.1\textwidth]{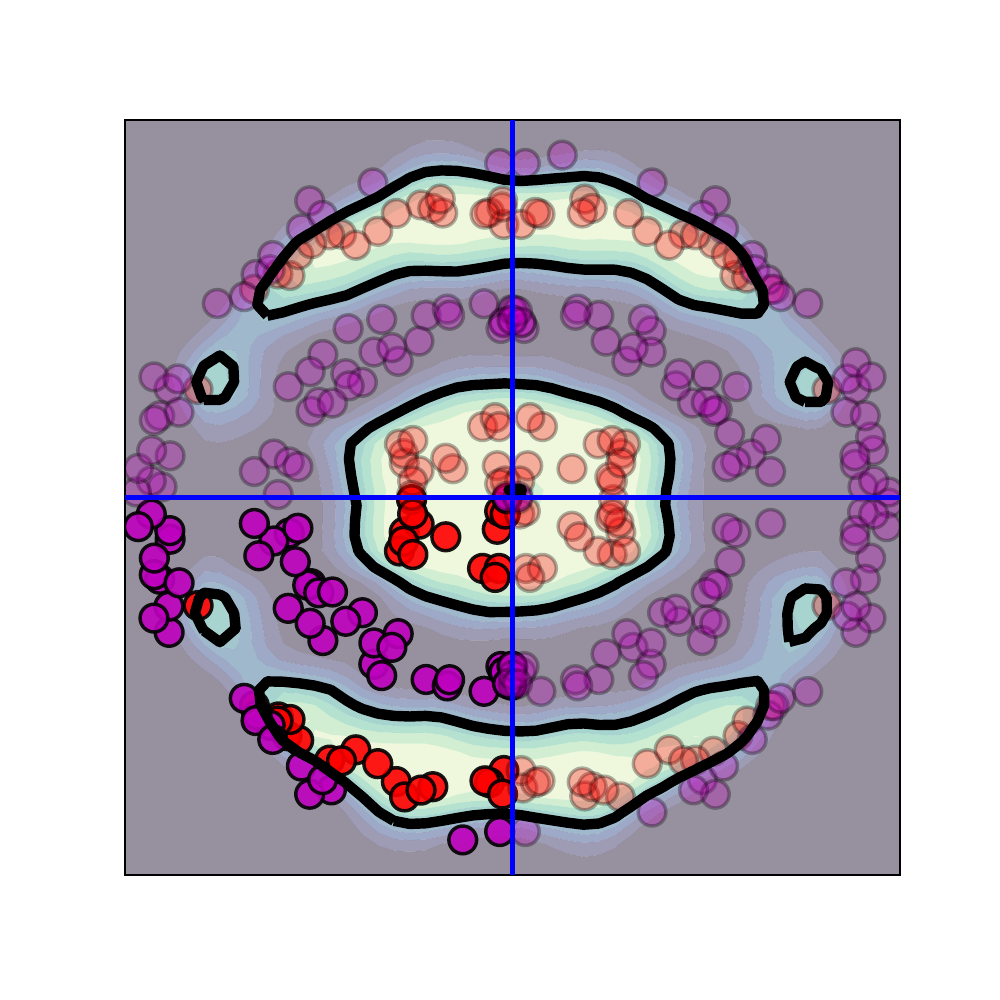}
		\caption{SL+DA+Tempering}
		\label{fig:da_crossent_da_tempering}
	\end{subfigure}
	\begin{subfigure}{0.24\linewidth}
		\captionsetup{font=small}
		\includegraphics[height=1.1\textwidth]{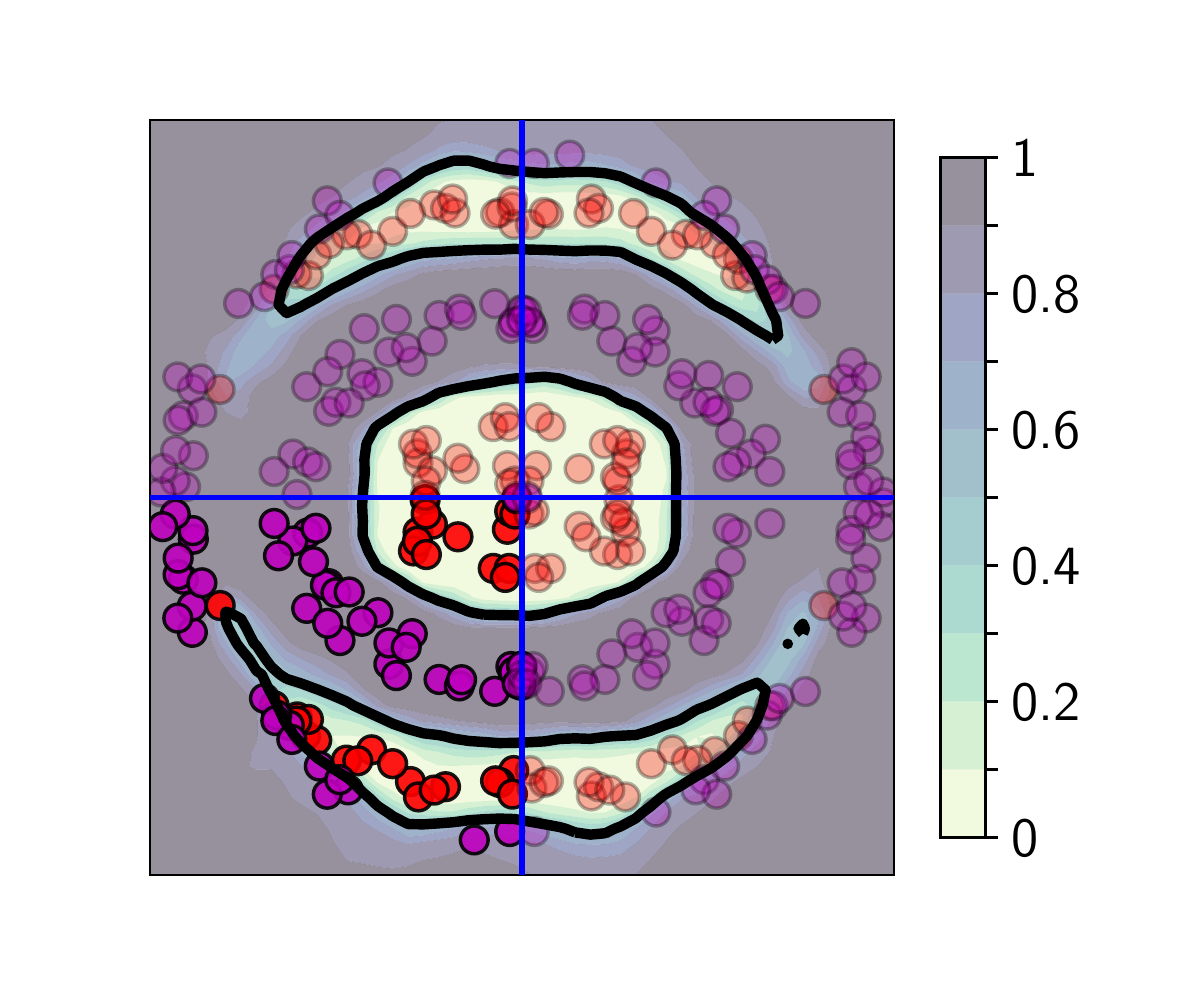}
		\caption{Noisy Dirichlet+DA}
		\label{fig:da_dirichlet_da}
	\end{subfigure}
	\caption{
		\textbf{Insufficient Sharpness of Softmax Likelihood.}
		Decision boundary of BNN classifiers on a synthetic problem.
		In each panel, the decision boundary is shown with a black line, datapoints are shown with magenta (class 0) and red (class 1) circles.
		We consider data augmentation -- random flip about the axes shown with blue lines; the augmented datapoints are shown semi-transparently.
		We use HMC to approximate the posterior in each of the panels.
		\textbf{(a)} A standard softmax likelihood provides a reasonable but imperfect fit of the training data.
		\textbf{(b)} Adding data augmentation leads to a more diffuse likelihood which is underconfident, and therefore an even worse fit.
		\textbf{(c)} Tempering with $T=0.1$ sharpens the softmax likelihood leading to a perfect fit on the train data.
		\textbf{(d)} A noisy Dirichlet model, provides a similar fit without the need for tempering.
	}
	\label{fig:aleatoric_pitfalls}
\end{figure*}

\subsection{Synthetic problem}
\label{sec:exp_synthetic}

We generate the data from a modified version of the two spirals dataset \citep[see e.g.][]{maddox2020rethinking}, where we restrict the data to the $x_1 <0, x_2 <0$ quadrant.
The exact code used to generate the data is available in the \cref{sec:app_exp_synthetic}.

In \cref{fig:aleatoric_pitfalls}(a) we visualize the data and the decision boundary of a Bayesian neural network.
For the Bayesian neural network, we use an iid Gaussian prior $\mathcal N(0, 0.3^2)$ over the parameters of the model;
we use full batch Hamiltonian Monte Carlo (HMC) to sample from the posterior, following \citet{izmailov2021bayesian}.
With the chosen prior, the model is not able to fit the data perfectly, but provides a reasonable fit of the training data.

In \cref{fig:aleatoric_pitfalls}(b) we consider the effect of data augmentation.
Specifically, we apply random flips about the $x_1$ and $x_2$ axes.
In the figure, the augmented datapoints are shown semi-transparently.
To evaluate the network under data augmentation, we run HMC to sample from the posterior distribution in \cref{eq:implied_posterior}.
The Bayesian neural network provides a lower quality fit to the data compared to the fit without data augmentation shown in \cref{fig:aleatoric_pitfalls}(a).
Indeed, according to our analysis in \cref{sec:data_aug_effects}, the observation model is softened by the data augmentation, leading the model to fit the training data poorly.

According to \cref{sec:aleatoric_uncertainty_understanding}, we can sharpen the model leading to a much better fit on the train data by using the tempered likelihood as in \cref{fig:aleatoric_pitfalls}(c) or the noisy Dirichlet observation model as in \cref{fig:aleatoric_pitfalls}(d).
With both of these approaches we achieve a near-perfect fit on the training data, which is desirable if we assume low aleatoric unceratinty.
For further details on this experiment, please see \cref{sec:app_exp_synthetic}.

\subsection{Visualizing the Effect of Data Augmentation}
\label{sec:data_aug_gp}

\begin{figure}[!t]
    \centering
    \includegraphics[width=.9\linewidth]{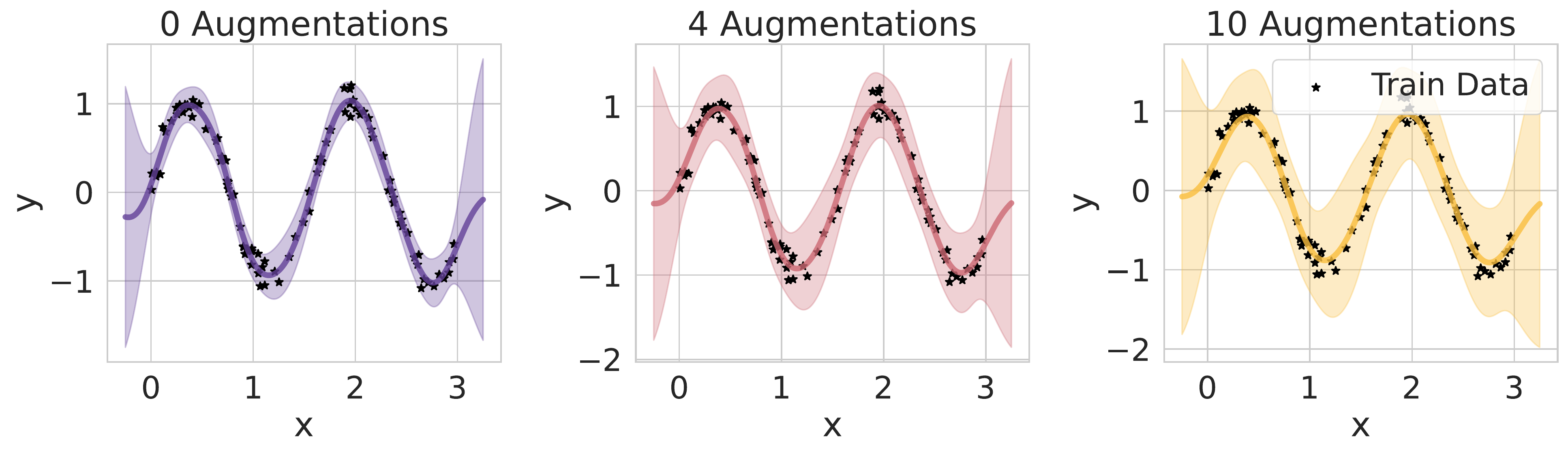}
    \caption{
    \textbf{Effect of data augmentation on GP regression.}
    As we increase the number $K$ of augmentations of the dataset, a GP regression model fit becomes more diffuse.
    Here the original training data is shown with black stars, and the augmented datapoints $t_j(x) = x + \tau_j$ that are obtained by shifting the inputs $x$ by a large constant $\tau_j$ are not shown.
    Note that both the predictive mean and confidence change with the number $K$ of augmentations.
    }
    \label{fig:augmentation_shift_gp}
\end{figure}

To illustrate our analysis in \cref{sec:data_aug_effects} we visualize the posterior in \cref{eq:implied_posterior} under data augmentation in a Gaussian process regression model.
We construct a Gaussian process \citep[GP][]{rasmussen2006gaussian} with a truncated RBF kernel such that $k(x,x^\prime) = 0$ when $\norm{x-x^\prime} > \delta$, where $\delta$ is the lengthscale of the kernel.
Consequently, the implied correlations between inputs beyond the distance $\delta$ are zero.
We use an augmentation policy such that an augmented sample is given by $t_j(x) = x + \tau_j$, for some set of vectors $\{\tau_j\}_{j=1}^K$ such that  $\norm{\tau_j} \gg \delta$.
Therefore we assume that $k(x, t_j(x')) = 0$ for all datapoints $x, x'$ and all augmentations $t_j$.
In other words, the outputs of the Gaussian process on the training data are independent from the outputs on the augmented datapoints. 

We visualize the posterior in \cref{eq:implied_posterior} for $K=1, 4$ and $10$ augmentations per datapoint in \cref{fig:augmentation_shift_gp}.
In the visualization, we show the posterior just on the original data, and do not show the augmented datapoints.
As the predictions on the train data are independent from the predictions on the augmented datapoints, the posterior in \cref{eq:implied_posterior} corresponds to tempering the posterior on the original training data with a warm temperature equal to the number $K$ of augmentations.
As a result, increasing the number of augmentations softens the likelihood, leading to a less confident fit on the training data.
In the next section, we show similar underconfidence on the training data for Bayesian neural networks under data augmentation.

\subsection{Image Classification with Bayesian Neural Networks}
\label{sec:exp_bnn_images}

In this section we experimentally verify the results of the analysis in \cref{sec:aleatoric_uncertainty_understanding,sec:data_aug_effects} for Bayesian neural networks in image classification problems.
First, we show that the noisy Dirichlet model does not require tempering to achieve optimal performance.
Then, we show that both the noisy Dirichlet model and tempered softmax likelihood can be successfully used to express our beliefs about the amount of label noise in the data.
Finally, we show that data augmentation softens the likelihood for BNNs, and that the optimal temperature depends on the complexity of the data augmentation policy.

For all experiments we use a ResNet-18 model \citep{he2016identity} and the CIFAR-10 \citep{Krizhevsky2009LearningML} and Tiny Imagenet \citep{Le2015TinyIV} datasets.
We use the SGLD sampler with a cyclical learning rate schedule \citep{welling2011bayesian, zhang2019cyclical} to sample from the posterior.
We provide details on the hyper-parameters in \cref{app:experimental_details}.

\begin{wrapfigure}{R}{.4\textwidth}
    \vspace{-1em}
    \centering
    \includegraphics[width=.9\linewidth]{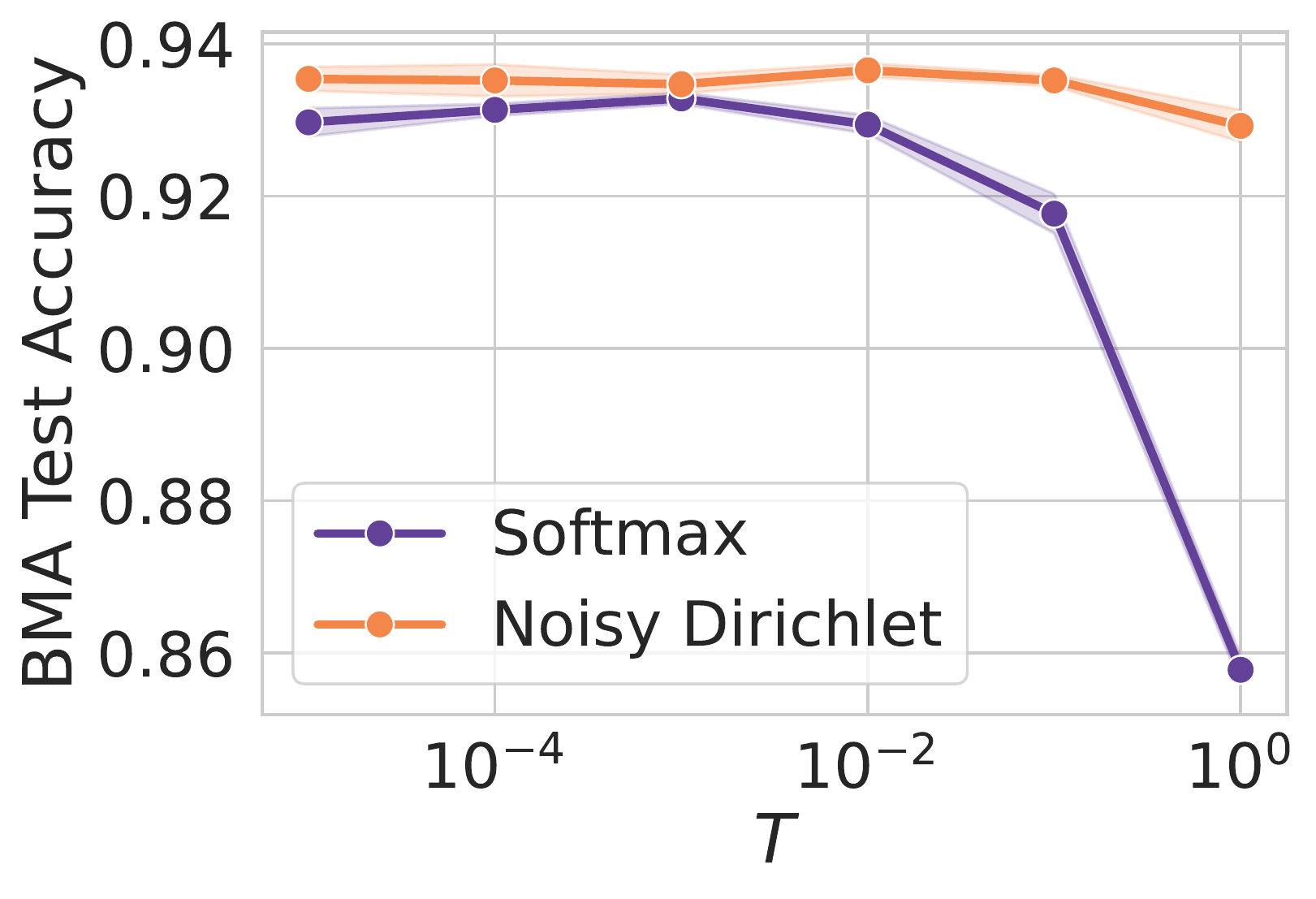}
    \caption{
    BMA test accuracy for the noisy Dirichlet model with noise parameter $\alpha_\epsilon = 10^{-6}$ and the softmax likelihood
    as a function of posterior temperature on CIFAR-10.
    The noisy Dirichlet model shows no cold posterior effect.
    }
    \label{fig:no_cpe}
\end{wrapfigure}
\textbf{No cold posterior effect in the noisy Dirichlet model.}
\quad
First, we explore the effect of posterior tempering on the standard softmax classification likelihood and the noisy Dirichlet model with noise parameter $\alpha_\epsilon=10^{-6}$.
In \cref{fig:no_cpe} we show the results on the CIFAR-10 dataset.
As reported by \citet{wenzel2020good}, for the standard softmax likelihood, tempering is required to achieve optimal performance, with $T=10^{-3}$ providing the best results.
For the noisy Dirichlet model on the other hand, tempering does not significantly improve the results with roughly constant performance across different temperature values.
In particular, the noisy Dirichlet model achieves near-optimal results at temperature $1$!
These results agree with our analysis in \cref{sec:aleatoric_uncertainty_understanding}: both tempering and the noisy Dirichlet observation model are alternative ways of expressing our beliefs about the aleatoric uncertainty in the data;
with the noisy Dirichlet model, we can achieve strong results without any need for tempering.

\begin{figure}[!ht]
\centering
\begin{tabular}{cc}
    \includegraphics[width=.47\linewidth]{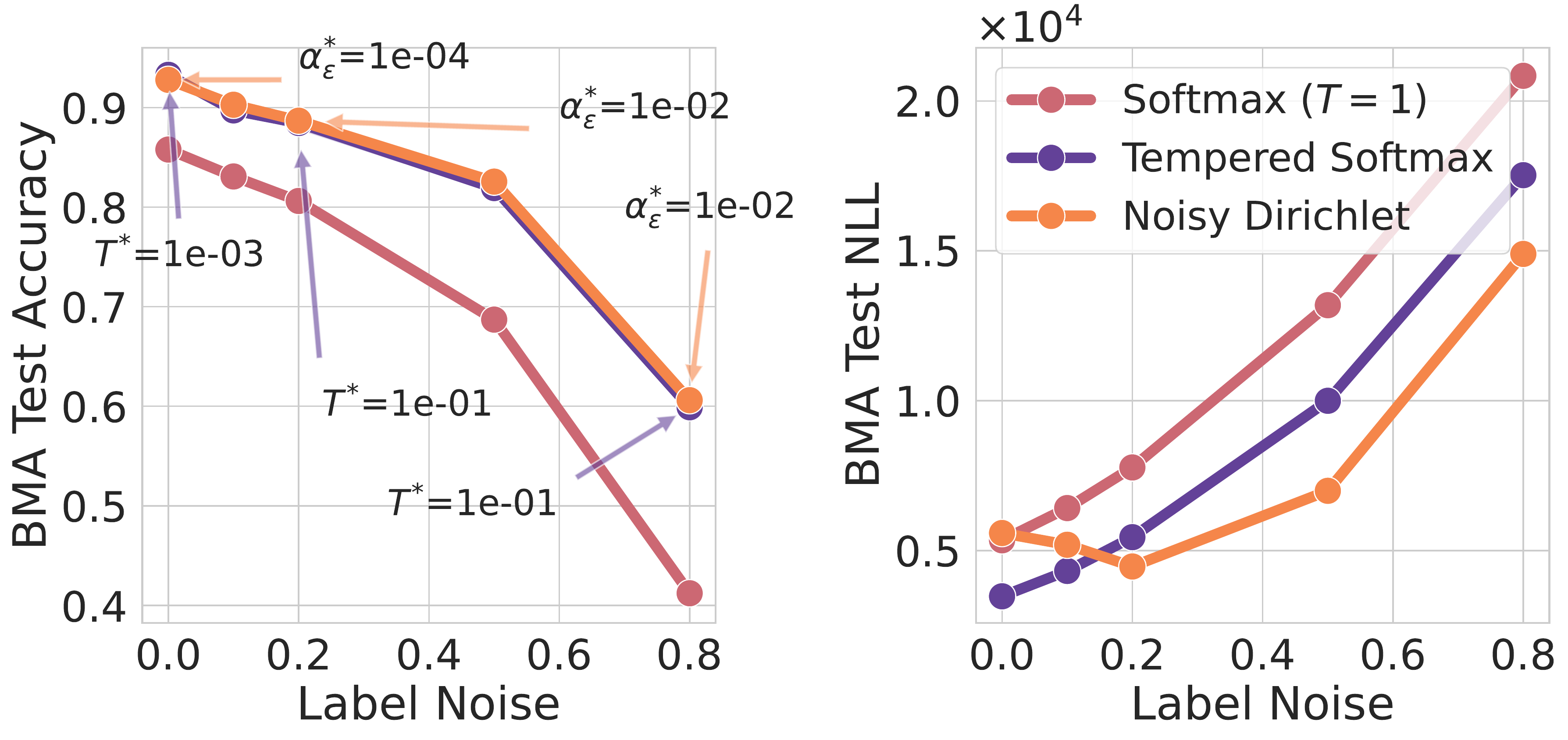}
    &
    \includegraphics[width=.47\linewidth]{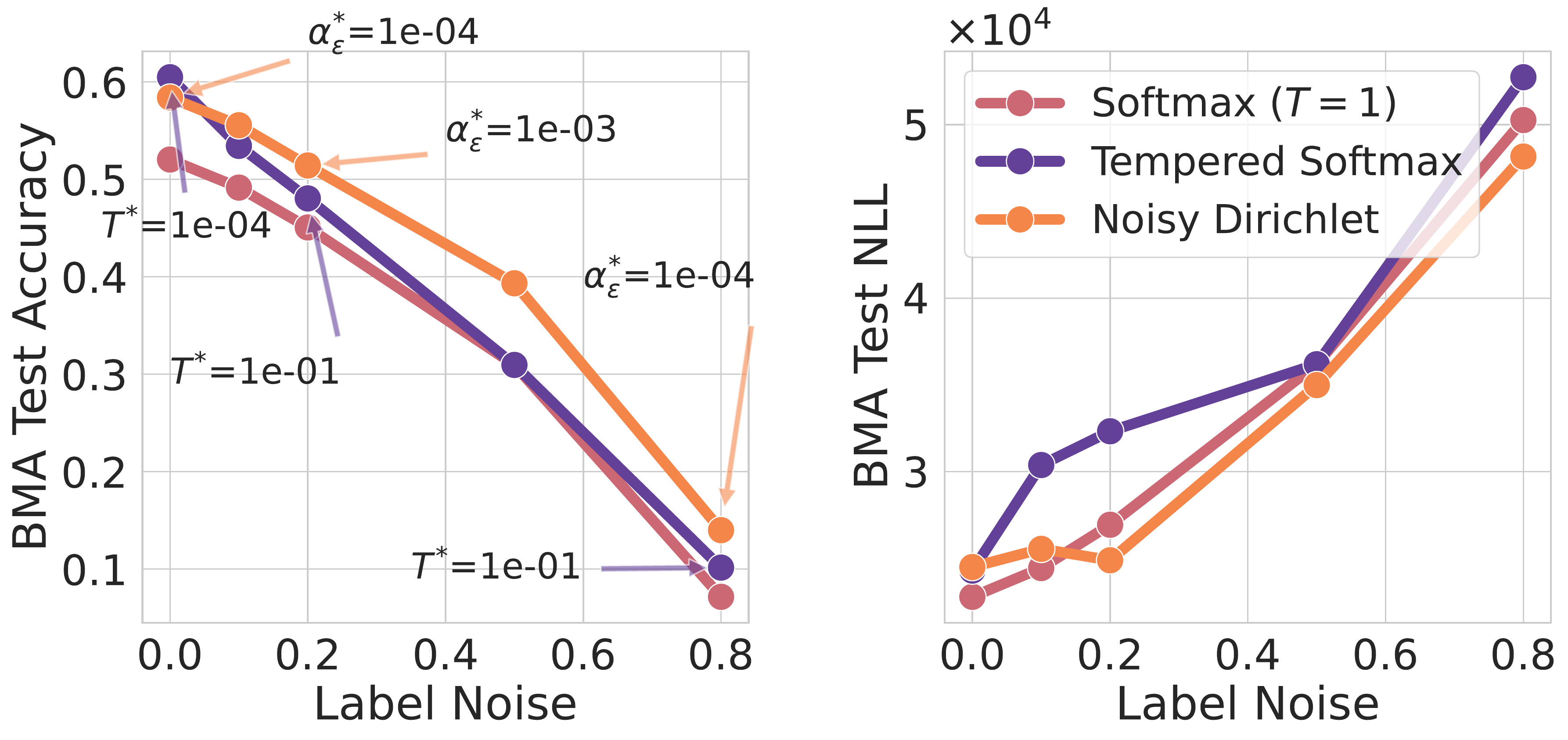}
\\
   (a) CIFAR-10 & (b) Tiny Imagenet
\end{tabular}
\caption{
\textbf{Label noise in BNN image classification.}
BMA test accuracy and negative log likelihood for the standard softmax, tempered softmax and noisy Dirichlet model 
\textbf{(a)} on CIFAR-10 and \textbf{(b)} Tiny Imagenet.
Accounting for the label noise via the noisy Dirichlet model or the tempered softmax likelihood significantly improves accuracy accross the board.
Moreover, the optimal performance is achieved by different values of temperature $T^{*}$ in the tempered softmax likelihood or noise $\alpha_\epsilon^{*}$ parameter in the noisy Dirichlet model, i.e.
no one model can describe the data optimally across all levels of label noise.
}
\label{fig:noisy_dirichlet_perf_cifar_labelnoise}
\end{figure}

\textbf{Modeling label noise.}\quad
In classification problems the aleatoric uncertainty corresponds to the amount of label noise present in the data.
Throughout the paper we argued that aleatoric uncertainty is misrepresented by standard Bayesian neural networks, and that
likelihood tempering and the noisy Dirichlet model are compelling approaches to incorporate information about the amount of label noise.
In \cref{fig:noisy_dirichlet_perf_cifar_labelnoise}, we show the BMA test accuracy and negative log-likelihood for the standard softmax likelihood, tempered softmax likelihood and the noisy Dirichlet model on CIFAR-10 and Tiny Imagenet under varying amounts of label noise.
We plot the results for the best performing temperature $T \in \{10^{-5},10^{-4},10^{-3},10^{-2},10^{-1},1,3,10\}$ for the standard softmax likelihood or noise $\alpha_\epsilon \in \{ 10^{-6}, 10^{-5}, 10^{-4}, 10^{-3}, 10^{-2}, 10^{-1}  \}$ for the noisy Dirichlet model.
Both on CIFAR-10 and on Tiny Imagenet and across all values of label noise, we can significantly improve performance over the standard softmax likelihood by explicitly modeling the aleatoric uncertainty with either tempering or the noisy Dirichlet model.
Moreover, different amounts of label noise require different values of the temperature or $\alpha_\epsilon$ parameter.

In \cref{fig:noisy_dirichlet_perf_cifar_labelnoise}(b), for Tiny Imagenet the noisy Dirichlet model provides a significant improvement in performance compared to the tempered softmax likelihood for high levels of label noise.
Consequently, one may wonder whether the noisy Dirichlet model should become the default for Bayesian classification.
Generally, no one approach is optimal across the board: we should pick the model that best desribes our beliefs about the data, which may be expressed by tempering the softmax likelihood, the noisy Dirichlet model or another approach.

\textbf{Data augmentation leads to underfitting on train data.}
\quad
Next, we verify the results from \cref{sec:data_aug_effects} for Bayesian neural networks in image classification.
In \cref{fig:train_diffuse_aug_lik}(a,b) we show the BMA train negative log-likelihood for the models trained with and without data augmentation.
Across a wide range of temperatures for the tempered softmax and $\alpha_\epsilon$ parameters for the noisy Dirichlet model,
adding the data augmentation reduces the quality of the fit on the original training data.
This result is analogous to the results presented in \cref{sec:data_aug_gp} and agrees with our analysis in \cref{sec:data_aug_effects}: data augmentation softens the likelihood, leading to a more diffuse fit.

\textbf{Complex augmentations require lower temperatures.}
\quad
Finally, we explore the effect of the number $K$ of data augmentations applied to each datapoint.
In \cref{sec:data_aug_effects} we showed that under data augmentation the likelihood is effectively tempered with a temperature $K$, assuming the predictions on the augmentated datapoints are completely independent from the original datapoints.
In practice however, the augmentations can be close to the original images, and the corresponding predictions can be highly correlated.
In \cref{fig:train_diffuse_aug_lik}(c), we consider five separate types of augmentations for ResNet-$18$ on CIFAR-$10$ at various posterior temperatures:
horizontal and vertical flips, random crops, combinations of flips and crops, and  AugMix \citep{hendrycks2019augmix} --- an augmentation policy employing a very diverse set of augmentations.
We find that, as predicted by the analysis in \cref{sec:data_aug_effects}, the optimal temperatures are different for different augmentation policies:
in terms of test NLL, the simple policies (vertical and horizontal flips) corresponding to $K = 2$ work best at warmer temperatures $T=1$, 
intermediate policies (crops and crops$+$flips) corresponding to $K \approx 100$ require lower temperatures $T \in [10^{-2}, 10^{-1}]$,
and the most complex AugMix policy requires the lowest temperature $T \le 10^{-4}$.

\begin{figure}[!t]
\centering
\begin{tabular}{cc}
    \includegraphics[width=0.48\linewidth]{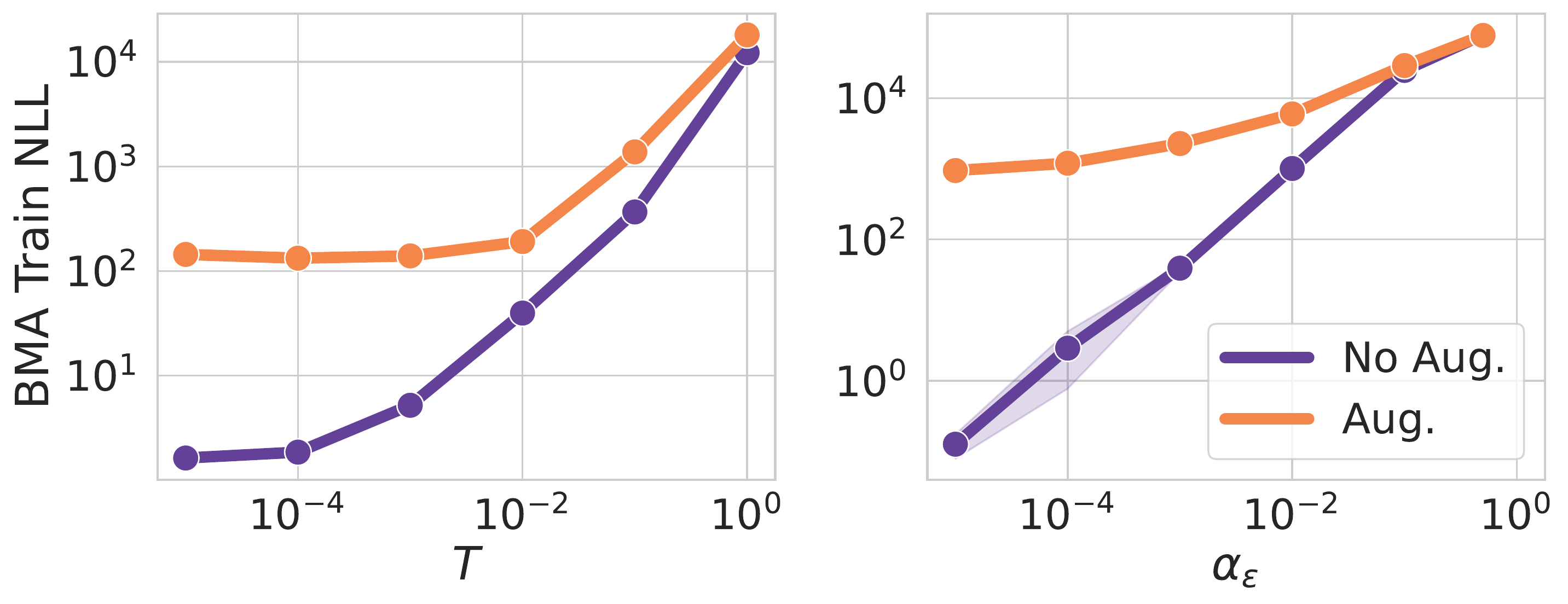} &
    \includegraphics[width=0.49\linewidth]{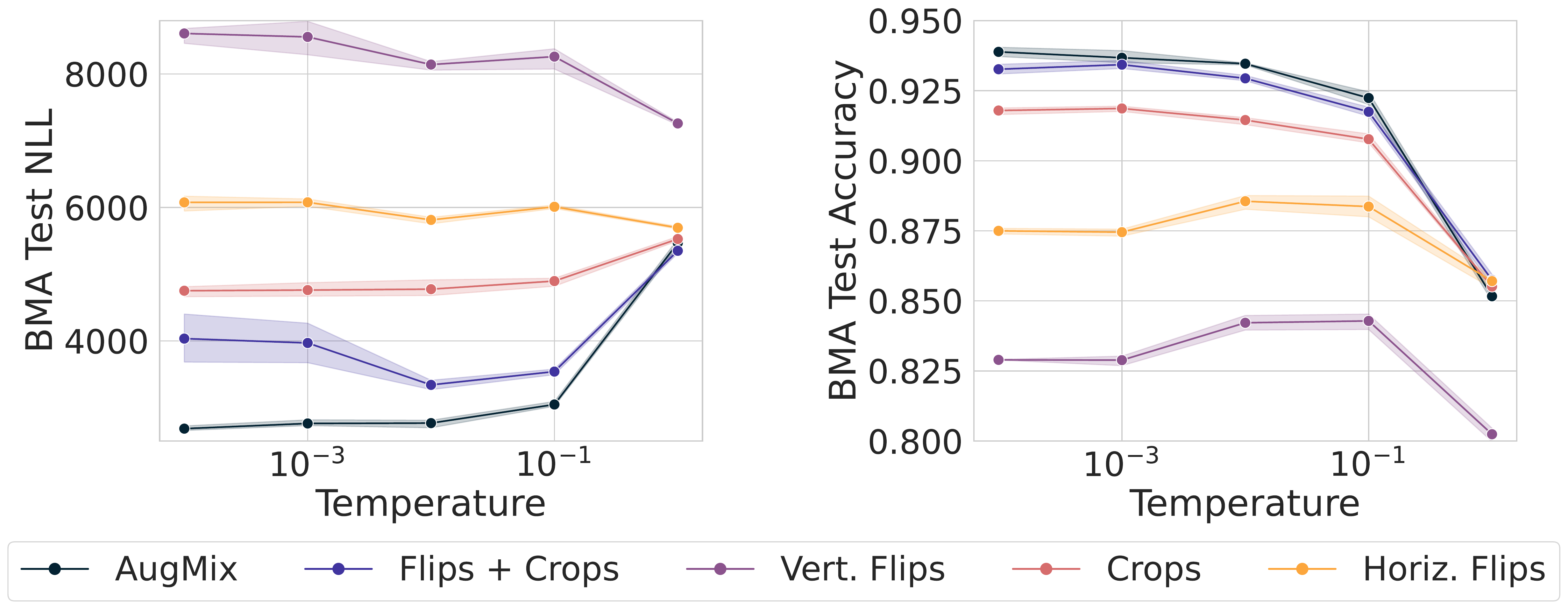}
    \\
    (a) Softmax \& Noisy Dirichlet & (b) Augmentations vs Tempering
\end{tabular}

\caption{
\textbf{Effect of data augmentation on BNN image classification.}
For all plots we use ResNet-18 on the CIFAR-10 dataset.
\textbf{(a)}: The BMA negative log-likelihood on the train data for the tempered softmax model across different temperatures $T$ and the noisy Dirichlet model across different values of noise $\alpha_\epsilon$, both with and without data augmentation.
As predicted in \cref{sec:data_aug_effects}, data augmentation softens the likelihood, leading to a more diffuse fit on the train data.
Tempering or reducing the noise parameter $\alpha_\epsilon$ is needed to counteract the effect of data augmentation.
\textbf{(b)} BMA test accuracy and NLL for various augmentation policies.
As predicted in \cref{sec:data_aug_effects}, more complex policies corresponding to a higher effective number of augmentations $K$ require lower temperatures for optimal performance.
}
\label{fig:train_diffuse_aug_lik}
\end{figure}

\section{Discussion}

A correct representation of aletoric uncertainty is crucial to achieve strong performance with Bayesian neural networks (BNNs).
Standard Bayesian classifiers have no explicit mechanism to represent our beliefs about aleatoric uncertainty, which can lead to 
inadequate fits to the training data.
This effect is exaggerated by data augmentation, which softens the likelihood, making the models underconfident on the training data.
Posterior tempering is a simple and practical way to correct for this misspecification and express our beliefs about the aleatoric uncertainty:
most benchmark datasets have very low levels of label noise, and we can express this belief by tempering the softmax likelihood.
We also showed that we can achieve a similar effect without tempering, by using a prior that forces the model to be confident on the training datapoints with the noisy Dirichlet model.

For practitioners, for any classification problem with Bayesian neural networks, we recommend to use one of the mechanisms we discussed to incorporate beliefs 
about aleatoric uncertainty. We have found both tempered softmax likelihood and noisy Dirichlet model highly effective both for data with low and high levels of label noise.

This work is generally about representing aleatoric uncertainty in Bayesian classification, and precisely how data augmentation counterintuitively reduces
the confidence in the observation labels. However, we also resolve many observations about the cold posterior effect in Bayesian neural networks, 
which we now discuss in the context of recent work.

\textbf{\citet{Noci2021DisentanglingTR}: Data augmentation is sufficient but not necessary for CPE.}
\citet{Noci2021DisentanglingTR} confirm the findings in \citet{izmailov2021bayesian} and \citet{fortuin2021bayesian} that data augmentation plays a significant role in the cold posterior effect.
In particular, \citet{izmailov2021bayesian} show that removing data augmentation from the models in \citet{wenzel2020good} alleviates the cold posterior effect in all of their examples.
However, \citet{Noci2021DisentanglingTR} also show that it is sometimes possible to achieve CPE without data augmentation, for example by subsampling the dataset. Similarly, 
\citet{wilson2020bayesian} note that many types of misspecification could lead to a cold posterior effect.

In our work, we provide a more nuanced perspective on the cold posterior effect and data augmentation:
data augmentation leads to a poor representation of aleatoric uncertainty, which can be addressed by tempering the posterior.
We note that BNNs can misrepresent aleatoric uncertainty even without data augmentation, and can have a reasonable representation of aleatoric uncertainty in the presence of data augmentation.
However, in the vast majority of practical scenarios where the cold posterior effect has been demonstrated, it appears due to the use of data augmentation. Thus data augmentation is 
\emph{largely} responsible for the cold posterior effect, and we have shown here that this is precisely because augmentation changes our representation of aleatoric uncertainty to be 
underconfident, and tempering can correct for this underconfidence, more honestly representing our beliefs.

Finally, \citet{izmailov2021dangers} show that BNNs have issues under covariate shift, due to the posterior not contracting sufficiently along some directions in the parameter space.
The same issue occurs when BNNs are applied to extremely small datasets, which may affect the results on data subsampling presented in \citet{Noci2021DisentanglingTR}. 

\textbf{\citet{nabarro2021data}: the lack of a CPE without DA is an artifact of using the wrong model (i.e. without DA).}\quad
Similar to our discussion in \cref{sec:data_aug_effects}, \citet{nabarro2021data} point out that tempering does not always provide a principled approach to data augmentation.
They instead devise a new observation model, where the model outputs are averaged over all the augmented datapoints.
With this observation model, they still observe the cold posterior effect.
These results are interesting in that they show that a valid likelihood is not sufficient to remove the cold posterior effect.
However, these results do not rule out the possibility that the proposed likelihood is a poor description of the data generation process: indeed, in reality the labels for the training image are not produced by considering all the possible augmentations of these images.

Generally, our results suggest that the cold posterior effect is caused by a poor representation of the aleatoric uncertainty in the BNNs.
It can happen with or without data augmentation, with valid or invalid likelihoods. A valid likelihood is not necessarily a well-specified likelihood.
However, we believe that the interpretation that the lack of CPE in models without data augmentation is an artifact of miss-specification is highly questionable:
when our beliefs about the aleatoric uncertainty in the data are correctly captured by the model, we should not need posterior tempering to achieve strong results;
it is the need for tempering which is an indication of miss-specification.

\textbf{\citet{Noci2021DisentanglingTR}: Cold posteriors are unlikely to arise from a single simple cause.}\quad
\citet{Noci2021DisentanglingTR} show examples, where the cold posterior effect arises in isolation from data augmentation, data curation, or prior misspecification.
Consequently, they argue that no one cause can fully explain the cold posterior effect.
While we agree with this general argument, which is also aligned with the discussion in \citet{wilson2020bayesian}, we note that all of the considered causes are directly related to aleatoric uncertainty.
Indeed, in this work we have shown that data augmentation significantly affects the level of aleatoric uncertainty assumed by the model.
The data curation is also directly connected to aleatoric uncertainty, as curated datasets are expected to have low label noise.
Finally, in BNNs the prior over the parameters specifies the assumptions about aleatoric uncertainty, and tempering can be used to correct for the effect of the prior.
We believe that our results in this paper provide a compelling explanation for the observations in \citet{Noci2021DisentanglingTR}.

\textbf{Summary.} Overall, properly representing aleatoric uncertainty is a challenging but fundamentally important consideration in Bayesian classification. We have shown that posterior tempering provides a mechanism
to more honestly represent our beliefs about aleatoric uncertainty, especially in the presence of data augmentation. In general, as in \citet{wilson2020bayesian}, we should not be alarmed if $T=1$ is not 
optimal in sophisticated models on complex real-world datasets. Moreover, we have shown how other mechanisms to represent aleatoric uncertainty, such as the noisy Dirichlet model, do not suffer from a
cold posterior effect in the presence of data augmentation. Indeed, while an interesting phenomenon, cold posteriors should not be conflated with the success or failure of Bayesian deep learning. In general, 
approximate inference in Bayesian neural networks has been making great strides forward, often providing better performance at a comparable computational cost to standard methods. However, there
\emph{are} practical challenges to the adoption of Bayesian deep learning. For example, \citet{izmailov2021dangers} shows that Bayesian neural networks can profoundly degrade in performance under
a wide range of relatively minor distribution shifts --- behaviour which could affect applicability on virtually any real-world problem, since train and test rarely come from exactly the same distribution. While their \emph{EmpCov} prior provides a partial remedy, there is still much work to be done. For example one could develop priors that protect against covariate shift by accounting for linear dependencies in the internal representations of the network.

\textbf{Acknowledgments.} 
We would like to thank Micah Goldblum and Wanqian Yang for helpful comments. This research is supported by an Amazon Research Award, NSF I-DISRE 193471, NIH R01DA048764-01A1, NSF IIS-1910266, and NSF 1922658 NRT-HDR: FUTURE Foundations, Translation, and Responsibility for Data Science.
We also thank NYU IT High Performance Computing for providing the infrastructure to run our experiments.

\vspace{-5mm}
\bibliography{references}

\clearpage
\appendix

\section{Detailed Description of the Synthetic Experiment}
\label{sec:app_toy_exp}

To intuitively grasp the impact of aleatoric uncertainty on inference, we
first develop a toy illustration. 
Consider the \textit{two spirals} binary classification problem \citep{huang2020understanding} where,
for training we generate $50$ samples with both $x_1,x_2 < 0$.
\cref{fig:aleatoric_pitfalls} visualizes the posterior predictive density for various settings.

By naively learning from the training data using the softmax likelihood, we find that 
the predictive density is poorly calibrated to the effect that it does not learn
anything meaningful except in the region containing the training data, as 
visualized in \cref{fig:da_crossent}. But, by virtue of our a priori knowledge
about the mirror symmetry across both axes, we can create augmented data, i.e.
for every $(x,y)$ pair in the training data, we create augmentations 
$(\pm x, \pm y)$ for use in training. With this modification, we find that the predictive density surface in \cref{fig:da_crossent_da} appears to improve but remains very diffuse, implying underconfidence.

But in fact, we can fix the predictive density surface (and hence the performance) by
accounting for aleatoric uncertainty. In \cref{fig:da_crossent_da_tempering} we see that
tempering allows us to sharpen the softmax likelihood such that the final predictive
density does not remain diffuse. We are implicitly correcting for aleatoric
uncertainty which was artificially inflated due to data 
augmentations. Alternatively, by using a noisy Dirichlet likelihood which allows
explicit control over the amount of aleatoric uncertainty in our data observation
model, we are able to achieve a similar effect as shown in \cref{fig:da_dirichlet_da}.

This toy illustration is representative of how modern machine learning models for
classification are trained in practice, and provide a first direct understanding
of the interaction of aleatoric uncertainty with tempering and data augmentation.

\section{Sharpening with the Smoothed Softmax Likelihood}
\label{sec:sharp_logits}

A natural approach to consider sharpening the softmax likelihood is by smoothing the logits. The \emph{tempered softmax likelihood} for a given class observation $y=c$ 
conditional on input $x$ is given by,
\begin{align}
\frac{1}{T} \log{p(y = c \mid x)} = \log \frac{\exp\{ f_c(x; \params) / T\}}{(\sum_{j=1}^C \exp\{f_j(x; \params)\})^{1/T}} \label{eq:tempered_softmax}
\end{align}
where $f_j(x; \params) \in \reals$ represents the logit for the $j^{\mathrm{th}}$ class of a total of $C$ classes. A
properly normalized distribution, however, leads to what we term the 
\emph{smoothed softmax likelihood},
\begin{align}
\log{p_{T}(y = c \mid x)} =  \log \frac{\exp\{ f_c(x; \params) / T\}}{\sum_{j=1}^C \exp\{f_j(x;\params)/T\}}. \label{eq:smoothed_softmax}
\end{align}
Versions of this likelihood have appeared in prior literature, especially when 
considering distillation \citep{hinton2015distilling} and temperature (e.g. Platt) scaling \citep{platt1999probabilistic,guo2017calibration}. 
More recently, \citet{zeno2020cold} have pointed out a connection between \cref{eq:smoothed_softmax} and the cold posteriors \cref{eq:cold_posterior} studied by \citet{wenzel2020good}.

\begin{wrapfigure}{R}{.333\textwidth}
\includegraphics[width=\linewidth]{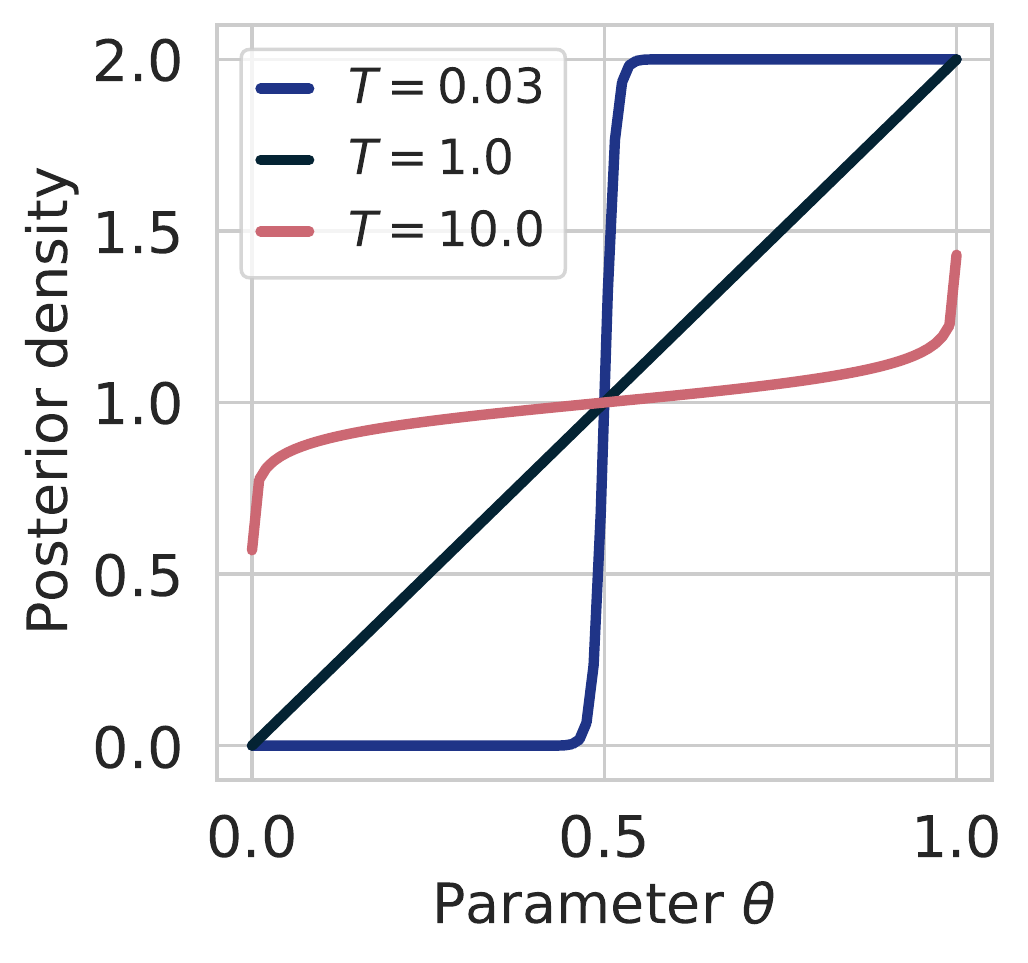}
\centering
\caption{A smoothed likelihood \cref{eq:smoothed_softmax} is insufficient to achieve arbitrary sharpness (\cref{sec:sharp_logits}). An arbitrarily sharp likelihood would concentrate around a single value of $\theta$ rather than being effectively uniform over ranges.}
\label{fig:logit_scaling}
\vspace{-1.5em}
\end{wrapfigure}

\paragraph{Insufficiency of Smoothed Softmax}
Consider a toy coinflip example --- we are interested in
inferring the probability of heads $\params \in [0,1]$ using a single observation of
a heads $\dset = \{H\}$. The posterior over $\params$, under the assumption of a uniform prior ${p(\params) = \mathrm{U}[0,1]}$ is given by ${p(\params \mid \dset) \propto p(H \mid \params) = \theta}$. The smoothed softmax likelihood corresponds to rescaling the logit ${\ell(\params) = \log\{\params/(1-\params)}\}$ as ${p_T(H \mid \params) = \sigma(\ell(\params) / T)}$, where ${\sigma(z) = 1/(1+\exp{\{-z\}})}$ is the sigmoid function  \footnote{Note here that $\sigma(\ell(\theta)) = \theta$}.

Using the tempered softmax likelihood by raising to a power $1/T$ where $T<1$, we
effectively sharpen the density function and force the posterior to concentrate more
sharply around $1$ owing to the single observation. On the other hand, using a 
smoothed softmax likelihood ends up spreading the posterior mass very differently, to 
the effect that it undermines the evidence from observation. This is visualized in 
\cref{fig:logit_scaling}. In fact, no matter what temperature we use for a tempered 
posterior, the smoothed softmax likelihood cannot be made arbitrarily sharp.

In \cref{fig:smooth_vs_tempered_softmax}, we empirically verify with ResNet-$18$ on CIFAR-$10$ that no matter the value of temperature $T$, a smoothed softmax likelihood does not
outperform a tempered likelihood. Moreover, it does not mitigate the cold posterior effect. Therefore, we can return to our discussion of cold and tempered posteriors.

\section{Connecting Cold Posteriors and Tempered Likelihood Posteriors}
\label{sec:conn_cold_tempered}

\subsection{Connections in Bayesian Linear Models}
\label{sec:conn_lin_reg}

We begin by taking a look at the connection between the Bayesian, cold, and tempered posteriors under linear models. For a dataset $\dset = \{X,y\}$ consisting of $d$-dimensional inputs ${X = [x_1;x_2;\dots;x_N] \in \reals^{N \times d}}$, corresponding 
outputs ${y \in \reals^{N}}$, a known observation noise variance $\sigma^2$, and prior
over parameters $\params \sim \gaussian{0, \alpha \mbf{I}}$, the posterior over parameters is given by,
\begin{align}
p(\params \mid \dset) = \mathcal{N} ( & \left(X^\top X + \alpha^{-1} \sigma^2 \mbf{I}\right)^{-1} X^\top y, 
    \left(\sigma^{-2} X^\top X + \alpha^{-1} \mbf{I}\right)^{-1} ).	
\end{align}
Following \citet{grunwald2017inconsistency},who study $T > 1$ in the context of model mis-specfication, the tempered posterior for an explicit temperature $T$ is,
\begin{align}
p_{\mathrm{temp}}(\params \mid \dset) = \mathcal{N} ( & \left( \tfrac{1}{T} X^\top X + \alpha^{-1} T\sigma^2 \mbf{I}\right)^{-1} X^\top y, 
    \left( \tfrac{1}{T}\sigma^{-2} X^\top X + \alpha^{-1} \mbf{I}\right)^{-1} ).
\end{align}
Further, for a \emph{cold posterior}, where both the likelihood and the prior are
raised to a temperature, we can follow \citet{aitchison2020statistical} and rewrite 
the prior over $\params$ as $\gaussian{0, \alpha T  \mbf{I}}$, giving the posterior as,
\begin{align}
p_{\mathrm{cold}}(\params \mid \dset) = \mathcal{N} ( & \left(X^\top X + \alpha^{-1} \sigma^2 \mbf{I}\right)^{-1} X^\top y, 
    T\left(\sigma^{-2} X^\top X + \alpha^{-1} \mbf{I}\right)^{-1} ).	
\end{align}
Thus, ``cold" posteriors in this setting reproduce the same posterior mean as the
standard Bayesian posterior, only varying the value of the posterior covariance 
matrix. We visually demonstrate this effect for the posterior confidence ellipses over $\theta$ in \cref{fig:tempering_explainer} left.

\begin{figure}[!ht]
    \centering
    \includegraphics[width=.7\linewidth]{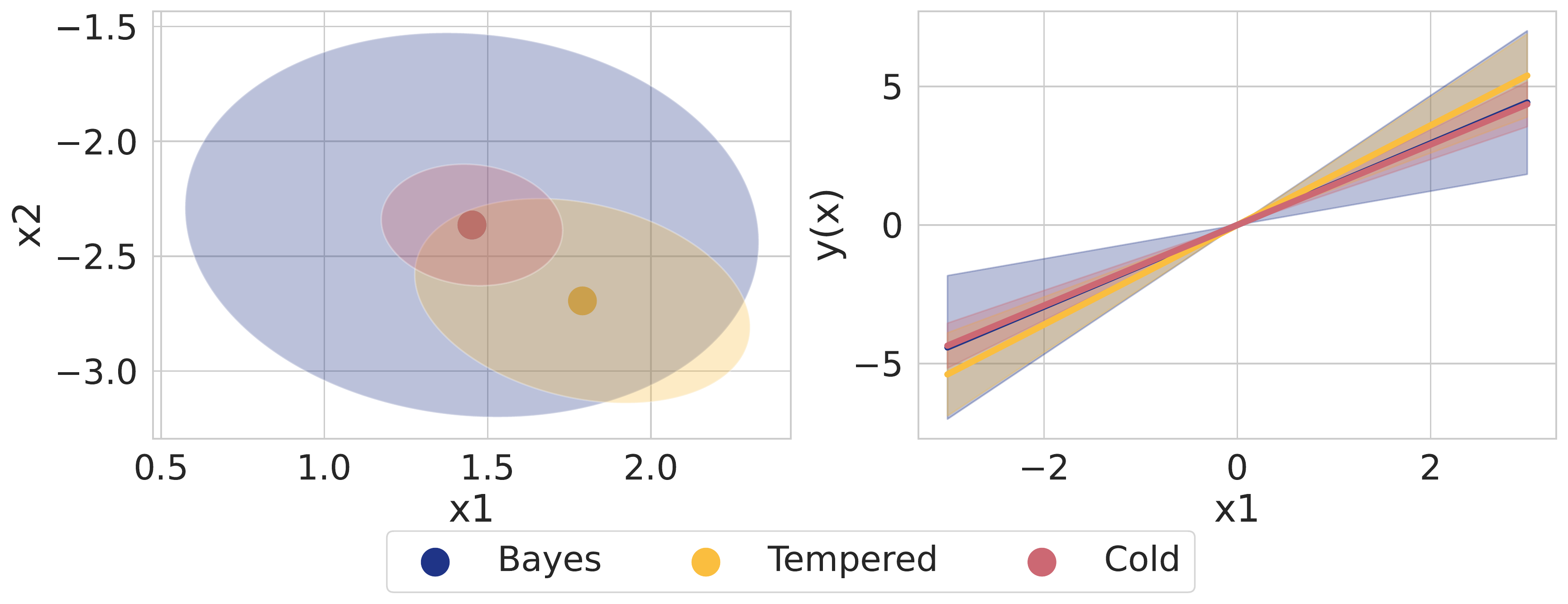}
    \caption{\textbf{Left:} Posterior confidence ellipses for Bayesian linear regression in both Bayesian, tempered, and cold frameworks. \textbf{Right:} Posterior functions for all three approaches. The cold posterior simply rescales the prior, producing more confident fits.}
    \label{fig:tempering_explainer}
\end{figure}

For $T < 1$, we tend to increase the concentration of the posterior around the 
posterior mean. In contrast, for $T > 1$, we tend to decrease the concentration of the 
posterior around the posterior mean. Propagating the posterior distribution on 
$\params$ to the resulting distribution over the function ${y_\star = \params^\top x_\star}$ for novel inputs $x_\star$, the cold posterior over $y_\star$ has the same mean as the Bayesian posterior, while only the posterior covariance is rescaled by a factor of $T$.
We also demonstrate this effect in \cref{fig:tempering_explainer} right, showing that the slope of the cold model is the same as the slope of the fully Bayesian model, only their posterior variances are different. 
However, the slope of the tempered model is different, owing to an effectively increased noise term.

A similar finding is produced by \citet{adlam2020cold} in the context of Gaussian process regression, where similarly cold posteriors simply rescale the posterior variance of the GP posterior, while tempering the posterior tends to increase the noise level.

\subsection{Cold Posteriors and Sharp Likelihoods}
\label{sec:cp_and_sharpness}

While constructing a cold posterior \cref{eq:cold_posterior} involves tempering both the likelihood and the prior, the cold posterior effect still remains even with a tempered likelihood posterior \cref{eq:tempered_posterior} alone. 
Many classes of prior distributions, specifically any prior that is bounded almost 
everywhere, continue to be proper prior distributions when raised to a power $1/T$.
We formalize this statement in \cref{thm:rescaled_priors}.

\begin{theorem}\label{thm:rescaled_priors}
For proper prior distributions, $p(\params),$ that have bounded density functions, e.g. $p(\params) \leq M$ almost everywhere, then for $T \leq 1,$ $p(\params)^{1/T} / \mathcal{I}$ is a proper prior distribution, where $\mathcal{I} = \int p(\theta)^{1/T} d\theta < \infty.$
\end{theorem}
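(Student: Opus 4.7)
The plan is to establish two things: first, that the integral $\mathcal{I} = \int p(\theta)^{1/T}\,d\theta$ is finite (so that normalization is possible), and second, that the resulting $p(\theta)^{1/T}/\mathcal{I}$ is nonnegative and integrates to one (so it is a proper density). The second point is immediate once finiteness is in hand, so essentially all the content of the theorem is showing $\mathcal{I} < \infty$.

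For the finiteness, I would use the simple decomposition $1/T = 1 + (1/T - 1)$, where the key observation is that $T \leq 1$ forces $1/T - 1 \geq 0$. This lets me write
\begin{equation*}
p(\theta)^{1/T} \;=\; p(\theta)^{1/T - 1} \cdot p(\theta).
\end{equation*}
Since the function $x \mapsto x^{1/T - 1}$ is nondecreasing on $[0, \infty)$ when $1/T - 1 \geq 0$, and since $p(\theta) \leq M$ almost everywhere by hypothesis, we obtain the pointwise (a.e.) bound $p(\theta)^{1/T - 1} \leq M^{1/T - 1}$. Multiplying through and integrating gives
\begin{equation*}
\mathcal{I} \;=\; \int p(\theta)^{1/T}\,d\theta \;\leq\; M^{1/T - 1} \int p(\theta)\,d\theta \;=\; M^{1/T - 1} \;<\; \infty,
\end{equation*}
where the last equality uses that $p$ is a proper density (integrates to one).

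Having established $\mathcal{I} < \infty$, I would conclude by noting that $p(\theta)^{1/T}/\mathcal{I}$ is nonnegative (a real power of a nonnegative quantity, divided by a positive constant) and integrates to one by construction, so it defines a proper probability density. I do not foresee any genuine obstacle in this argument; if anything, the only care needed is to track the direction of the inequality when the density happens to be bounded by $M < 1$ (in which case $M^{1/T - 1} \leq 1$, which only sharpens the bound), and to be explicit that the $p \leq M$ bound holds almost everywhere so that the pointwise inequality suffices for the integral comparison. Note also that the hypothesis $T \leq 1$ is essential: for $T > 1$ the exponent $1/T - 1$ is negative and the bound flips, so a heavy-tailed prior could easily fail to have $p^{1/T}$ integrable, explaining why the statement restricts to the ``cold'' regime relevant to the paper.
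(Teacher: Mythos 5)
Your proof is correct and takes essentially the same approach as the paper: both arguments use $T \le 1$ and the bound $p \le M$ to dominate $p(\theta)^{1/T}$ by $M^{1/T-1}\,p(\theta)$, whose integral is finite because $p$ is a proper density. The paper phrases this by first rescaling to $p/M \le 1$ and noting $(p/M)^{1/T} \le p/M$, while you factor the exponent as $1/T = 1 + (1/T - 1)$, but these are the same estimate.
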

\begin{proof}
As $p(\theta)$ is bounded above by $M$, then $p(\theta) / M \leq 1$ for all $x.$ As $1/T \geq 1,$ then $(p(\theta) / M)^{1/T} \leq p(\theta) / M \leq 1.$
By construction $p(\theta) / M$ is integrable as $p(\params)$ is integrable, so then $(p(\params)/M)^{1/T}$ must also be integrable implying that $p(\params)^{1/T}$ is integrable. 
To make $p(\params)^{1/T}$ a distribution, we only need to construct a normalization constant $\mathcal{I}$.
\end{proof}

\begin{wrapfigure}{R}{.4167\textwidth}
    \vspace{-1em}
    \centering
    \includegraphics[width=.85\linewidth]{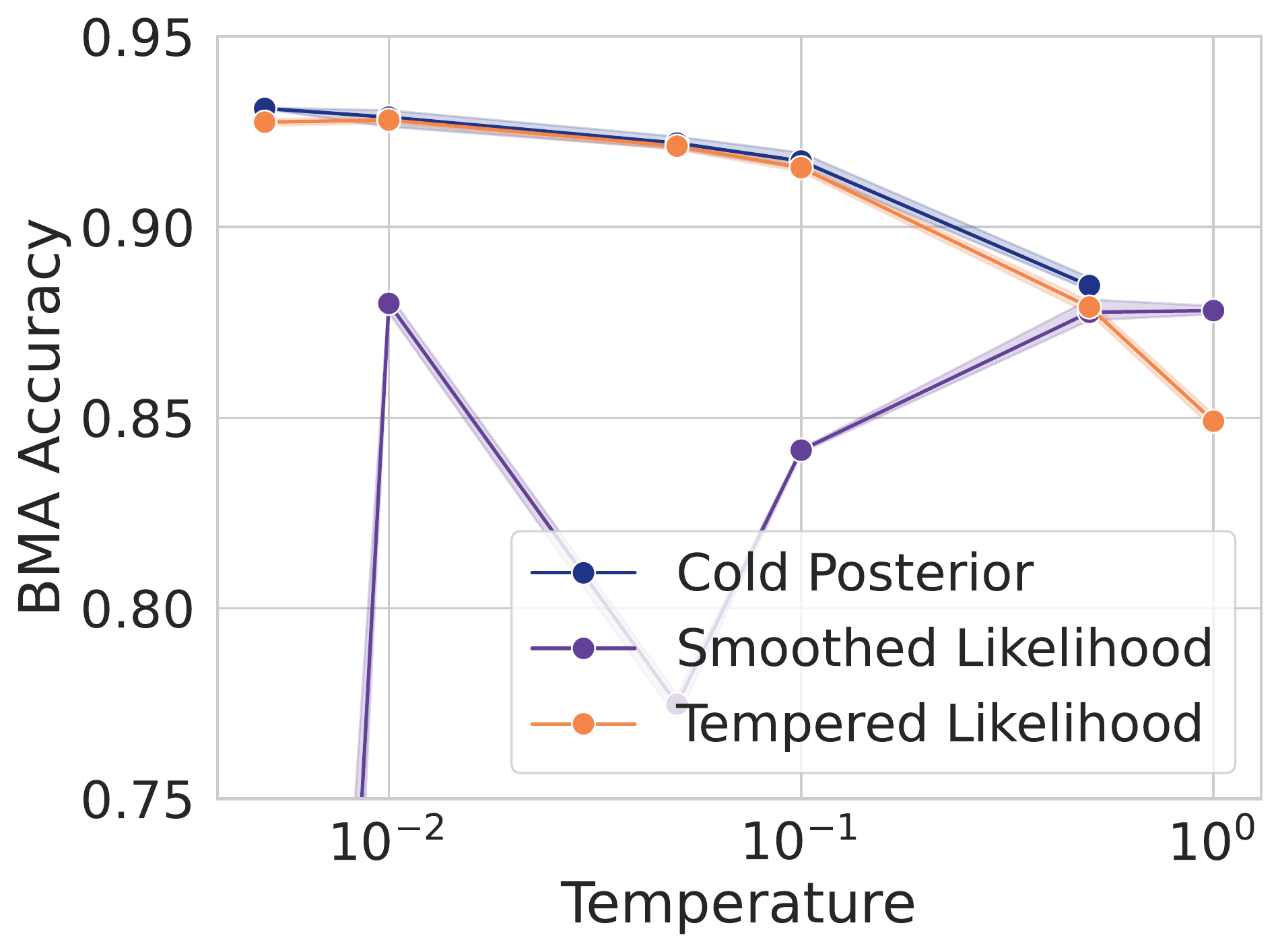}
    \caption{
    ResNet-$18$ on CIFAR-$10$. Cold posteriors are matched by tempered posteriors, but not by the smoothed likelihood.
    }
    \label{fig:smooth_vs_tempered_softmax}
    \vspace{-2em}
\end{wrapfigure}

\cref{thm:rescaled_priors} shows that the cold posterior effect in the Bayesian deep learning literature can be equivalently thought of in terms of a \emph{tempered likelihood} posterior alone. The only difference is that the actual priors being used are different from the stated priors in that they have tighter variances. See \cref{sec:conn_cold_tempered} for further examples and discussion.

The above result unifies several tempered likelihood findings \citep{zhang2019cyclical,heek2019bayesian} to those with the cold posterior studies \citep{wenzel2020good,fortuin2021bayesian}. Using a ResNet-$18$ on CIFAR-$10$, we empirically demonstrate such an equivalence in \cref{fig:smooth_vs_tempered_softmax}. A rescaled prior and step size $\epsilon$ can match the performance of a cold posterior. To match the gradient scaling, we must downscale the step size by the square root of the implied temperature, while we match the prior variance by setting it to $\epsilon T$. 
We give specific examples of these priors in the next section.

Thus, to understand the cold posterior effect, we may focus on the tempered likelihood term in \cref{eq:tempered_posterior}. Tempering the likelihood with $T < 1$ is sharpening the density function to facilitate a more accurate observation model per our beliefs. It is then only natural to use a likelihood which allows direct control of the sharpness, which we introduce next.
We also note that the likelihood and the prior tend to interact here. 
For a fixed prior, a sharper likelihood will tend to produce a sharper posterior, while decreasing the variance on the prior will additionally tend to produce a sharper (more confident) posterior.
This is exactly the effect produced by likelihood tempering with $T < 1$; tempering the likelihood produces a sharper likelihood which therefore produces a sharper posterior.

\subsubsection{More Cold Priors}
\label{app:cold_priors}

\cref{thm:rescaled_priors} covers many common prior choices in the Bayesian deep learning literature.
For \textit{Gaussian and Laplace priors}, the scaling term (variance in the case of the Gaussian prior and scale in the case of the Laplace prior) changes by a factor of $\epsilon T$ where $\epsilon$ is the original scale.
It also holds for correlated Gaussian priors such as those studied by \citet{fortuin2021bayesian,izmailov2021dangers}.
This effect was previously noted for Gaussian priors by \citet{aitchison2020statistical}.

For \textit{Student's $t$ priors}, we show below that the degrees of freedom increases and the scale term changes producing a distribution with smaller variance than the original distribution.
Similarly for \textit{Cauchy priors}, the entire form of the distribution changes and in such a way that it now has finite mean and variance, which it did not before.

The log pdf for a non-central student T distribution\footnote{\url{https://en.wikipedia.org/wiki/Noncentral_t-distribution}} is
\begin{align*}
    \log p(x) = \log \Gamma(v + 1/2) - \log \Gamma (v/2) - \log \sqrt{v \pi} \sigma - \frac{v + 1}{2}\log \left(1 + \frac{x^2}{v \sigma^2}\right)
\end{align*}
Rescaling by a temperature $T$ gives
\begin{align*}
    \frac{1}{T}\log p(x) = \frac{1}{T} \left(\log \Gamma(v + 1/2) - \log \Gamma (v/2) - \log \sqrt{v \pi} \sigma\right) - \frac{v + 1}{2T}\log \left(1 + \frac{x^2}{v \sigma^2}\right)
\end{align*}
and letting $\tilde v = \frac{v + 1}{T} - 1$ produces
\begin{align*}
    \frac{1}{T}\log p(x) &= \text{const.} - \frac{\tilde v + 1}{2}\log \left(1 + \frac{\tilde v}{\tilde v}\frac{x^2}{v \sigma^2}\right) 
    =\text{const.} - \frac{\tilde v + 1}{2}\log \left(1 + \frac{x^2}{\tilde v \tilde \sigma^2}\right)
\end{align*}
with $\tilde \sigma^2 = v \sigma^2 / \tilde v = \frac{v \sigma^2 T}{v + 1 - T}.$

For $v > 2,$ the original variance was $\sigma^2 \frac{v}{v - 2}$ and now it is $\sigma^2 \frac{T}{v + 1 - 3T},$ which is less whenever $v > 2$ and so the variance exists.
Note additionally that $\tilde v > v$ as well.

For a Cauchy distribution (or equivalently, follow the argument for the non-central $t$ with $\eta = 1$), the tempered prior follows the form 
\begin{align*}
    p_T(x) \propto (1 + x^2)^{-1/T},
\end{align*}
which produces a finite integral and therefore a proper prior for all $T \geq 1.$
This class of distributions has finite means for any $T < 1$ and finite variances for $T < 2/3$, whereas the Cauchy distributions do not have finite means or variances.
This change may have the effect of making posterior means exist for cold posteriors when they may not have existed before, as can be the case for logistic regression with Cauchy priors \citep{ghosh2018use}.

\subsection{Discussion with the Cold Posterior Effect}

These types of findings tend to suggest that ``cold" priors have smaller variances (or spreads, if the variances do not exist) than their warm counterparts. 
This is most clearly the case for Gaussian and Laplace prior distributions where the prior variance is reduced from $\epsilon$ to $\epsilon T$.
One natural question is if this finding is an explanation for the ``cold" posterior effect, an analogue of prior-misspecification, as discussed by \citet{wenzel2020good}. However, \citet{wilson2020bayesian} found that standard Gaussian $\mathcal{N}(0, 1)$ priors tend to be reasonably well-calibrated and that varying priors tends to have minimal impact on down-stream accuracy and calibration.
\citet{izmailov2021bayesian} verified this evidence with high-quality HMC samples, but without data augmentation.

\begin{figure}[!ht]
\centering
\includegraphics[width=.7\linewidth]{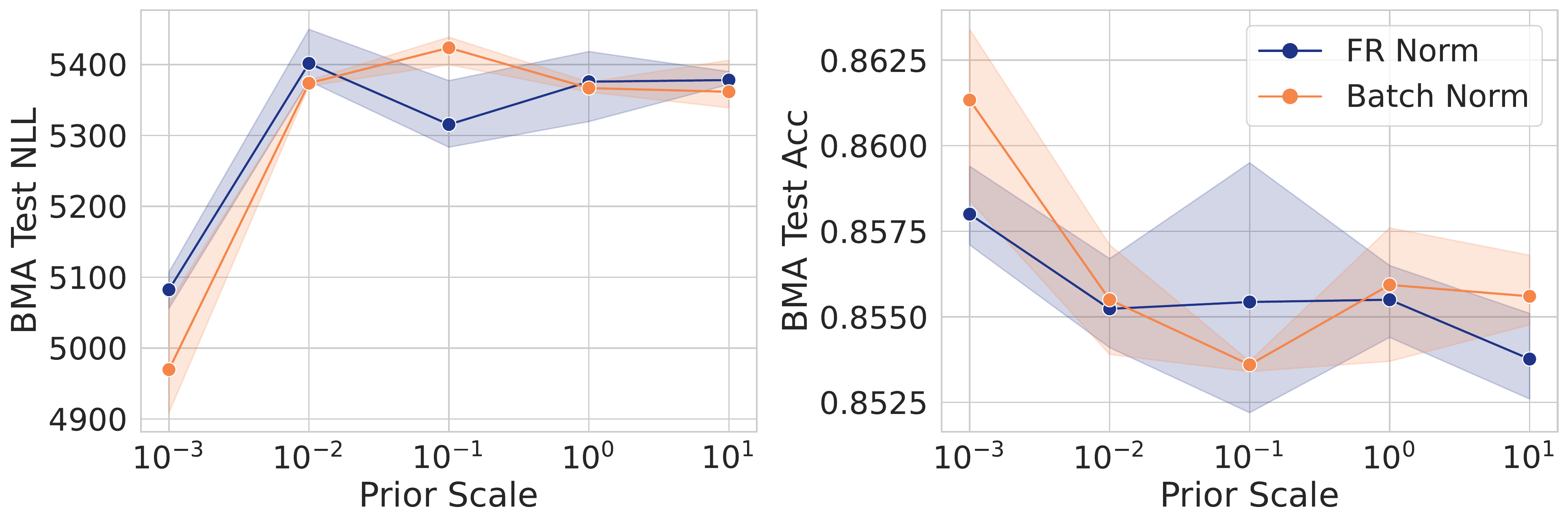}
\caption{Priors of various scales for BNNs with temperature $1$. No prior with the same step size is able to match the performance of the lower temperature.}
\label{fig:prior_scales}
\end{figure}

In \cref{fig:prior_scales}, we show that no choice of prior alone with temperature $1$ is able to reach the same accuracy as a tempered (or cold) baseline model.
Thus, it is not merely (Gaussian) prior misspecification that causes cold posteriors to exist, and we need to look beyond this potential explanation.

\section{Visualizations of Tempered Cross-Entropy and Noisy Dirichlet Likelihoods}
\label{sec:app_likelihood_viz}

\begin{figure}[!ht]
\centering
\includegraphics[width=0.7\linewidth]{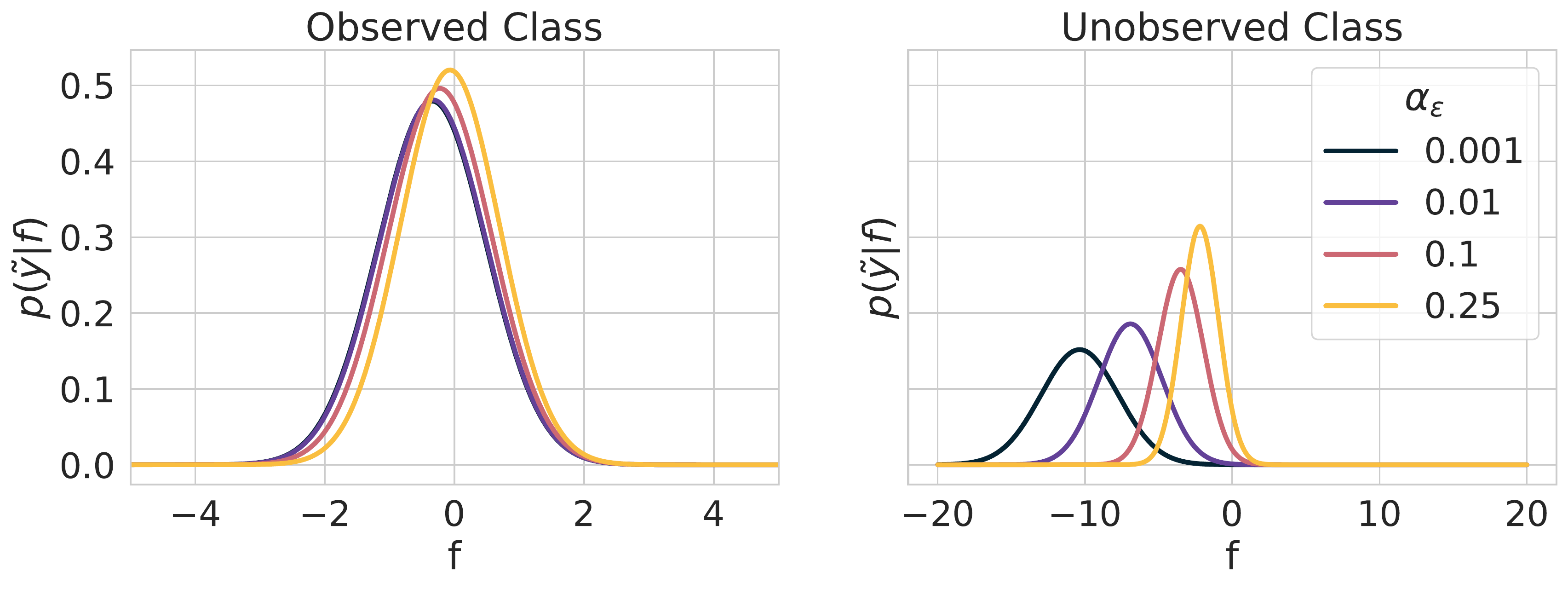}
\caption{Dirichlet densities for various values of $\alpha_\epsilon$ for both the observed class (\textbf{left}) and any unobserved classes (\textbf{right}). Higher values of $\alpha_\epsilon$ tend to produce sharper likelihoods as the distributions corresponding to these values have lower variances.
Here $f$ is the equivalent to the logit, while $\tilde{y}$ is the rescaled observation.
}
\label{fig:noisy_dirichlet}
\end{figure}

\section{Targeting Distributions for Variational Inference}\label{app:alt_inferences}

A naive attempt to incorporate data augmentation would be to simply stack all $K$ copies of the data into the posterior and then to infer it; however, this approach would contract the posterior by too much as we would be considering $NK$ data points rather than $N$ data points.
A second approach would be then to down-weight the augmentations so that we effectively see only $N$ data points by raising the likelihood of each (augmented) data point to a power $1/K$, which produces exactly \cref{eq:implied_posterior}.

Variational inference with stochastic gradient descent using the evidence lower bound (ELBO) produces a similar gradient to \cref{eq:stochastic_gradient} as 
\begin{align}
    \nabla \text{ELBO}(\phi) =&~ \frac{N}{m} \sum_{(x_i,y_i) \in \dset_m} \mathbb{E}_{q_\phi(\params)} \nabla_\params \log{p(y_i \mid t_j(x_i))}
+ \nabla_\phi \text{KL}(q_\phi(\params) || p(\params)), \label{eq:aug_vi_grad}
\end{align}
switching the parameters of the variational distribution to $\phi$ and again with transformations $t_j$ sampled uniformly from $\mathcal{T}.$
The same two sources of randomness, sub-sampling and augmentation, apply in this situation and computing the expected value of \cref{eq:aug_vi_grad} produces a similar target as \cref{eq:aug_sgld_grad_expectation}:
\begin{align}
        \mathbb{E}[\nabla \text{ELBO}(\phi)] =&~ \sum_{i=1}^N \sum_{j=1}^K [\nabla_\params \log{p(y_i \mid t_j(x_i))^{1/K}}]
+ \nabla_\phi \text{KL}(q_\phi(\params) || p(\params)),
\label{eq:aug_vi_grad_expectation}
\end{align}
implying that the full optimization problem VI is targeting uses a tempered likelihood in that term, e.g. $p(y_i \mid t_j(x_i))^{1/K}$.
We note that variational inference for BNNs often does end up requiring tempering by downweighting the KL term in the ELBO, see Section 2.3 of \citet{wenzel2020good} for several examples.
While we do not consider Laplace approximations under data augmentation, they additionally tend to require some amount of likelihood down-weighting for high performance (see e.g. \citet{immer2021scalable} for example), suggesting that they also tend to target a similar likelihood distribution.
\section{A Proper Likelihood for Data Augmentation}
\label{sec:app_aug_lik}

Treating data augmentations as independent samples from the true underlying data generating process is problematic, since the augmented samples are highly correlated with the originally observed samples. In \cref{sec:data_aug_gp}, we see that stacking the augmented samples in to a single unified dataset is problematic in the sense that observation noise artificially inflated. For well-calibrated predictions, this is a mischaracterization of the data generating process.

To arrive at a proper likelihood accounting for data augmentations $t_j \in \mathcal{T}$, consider a likelihood which extends the usual observation likelihood with a consistency term,
\begin{equation}
    \label{eq:aug_likelihood}
    p_{\mathrm{aug}}(\dset \mid \params) = \prod_{i=1}^N p(y_i \mid x_i) \cdot \prod_{t_j \in \mathcal{T}} p\left(f(t_j(x_i); \params)) \mid f(x_i; \params) \right).
\end{equation}
The augmentation likelihood $p_{\mathrm{aug}}$ properly accounts for aleatoric uncertainty when the augmentation is trivial, and not lead to underconfident predictions when the augmentations are uncorrelated. For a minibatch sample from the full dataset $\dset_m \subset \dset$ of size $m$, and a subset of $k$ augmentations $\mathcal{T}_k \subset \mathcal{T}$, the valid unbiased stochastic gradient estimator for use with SGLD is given by,
\begin{align}
\begin{split}
\nabla \widetilde{U}_{\mathrm{aug}}(\params) =&~ \frac{N}{m} \sum_{(x_i,y_i) \in \dset_m} \nabla_\params \log{p(y_i \mid t_k(x_i))} + \nabla_\params \log{p(\params)} \\
&~+ \cdot \frac{N}{m} \sum_{x_i \in \dset_m} \frac{K}{k} \sum_{t_j \in \mathcal{T}_k} \nabla_\params \log p\left(f(t_j(x_i); \params)) \mid f(x_i; \params) \right)  .
\end{split}\label{eq:stochastic_gradient_aug}
\end{align}
Note that to use the augmentation likelihood in \cref{eq:aug_likelihood}, we need to know the number $K$ of possible augmentations that we consider.

\section{Further Experimental Results}\label{app:further_experiments}
In the top left panel of \cref{fig:noisy_dirichlet_perf}, we demonstrate that softmax scaling in the presence of label noise produces very underconfident predictions.

For GPs, varying the values of $\alpha_\epsilon$ then vary the sharpness of the resulting posterior predictive distribution, as we show in \cref{fig:noisy_dirichlet_perf} for a two spirals problem with $150$ data points and $20$ noisy labels.
The posterior predictive for $\alpha_\epsilon = 0.001$ is only confident in a very small region of the data, whereas the higher values of $\eta$ are confident in a broader region indicating a sharper likelihood that enables more certainly about the observed data.
This is in comparison to a Laplace approximated Bernoulli GP classifier that is less confident about the data than $\alpha_\epsilon = 0.1$ but performs similarly in terms of confidence to the Dirichlet with $\alpha_\epsilon = 0.25,$ suggesting that too sharp of distributions can reduce some confidence.

\begin{figure}[!ht]
\centering
\includegraphics[width=\linewidth]{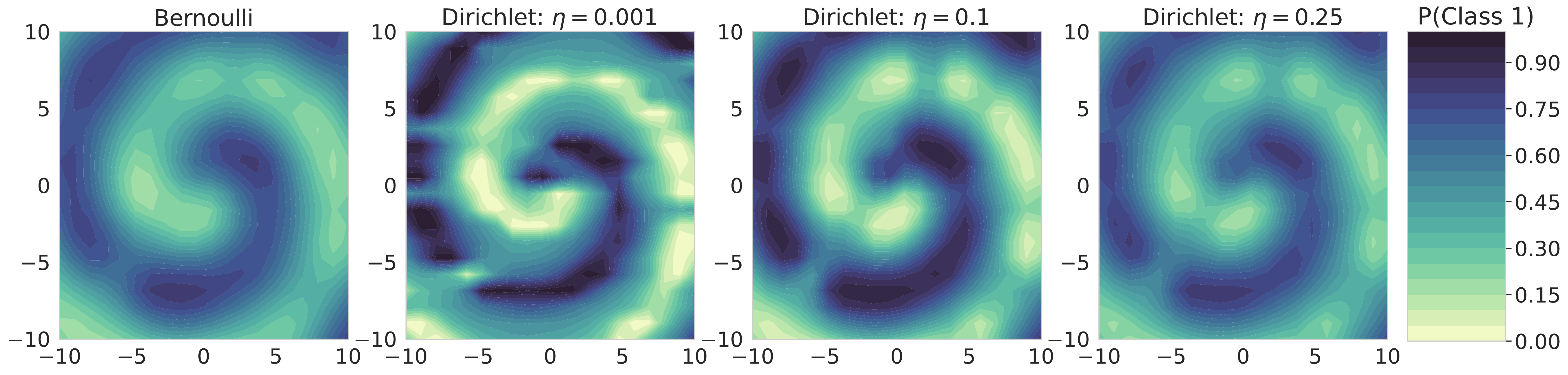}
\caption{Modeling label noise explicitly using \cref{eq:noisy_dirichlet_gauss} and using tuned values of $\eta$ enables better modelling of aleatoric uncertainty in the presence of label noise for GP classifiers on a two spirals problem with $150$ data points and $20$ flipped labels.}
\label{fig:noisy_dirichlet_perf}
\end{figure}

\begin{figure}[!ht]
\centering
\includegraphics[width=0.8\linewidth]{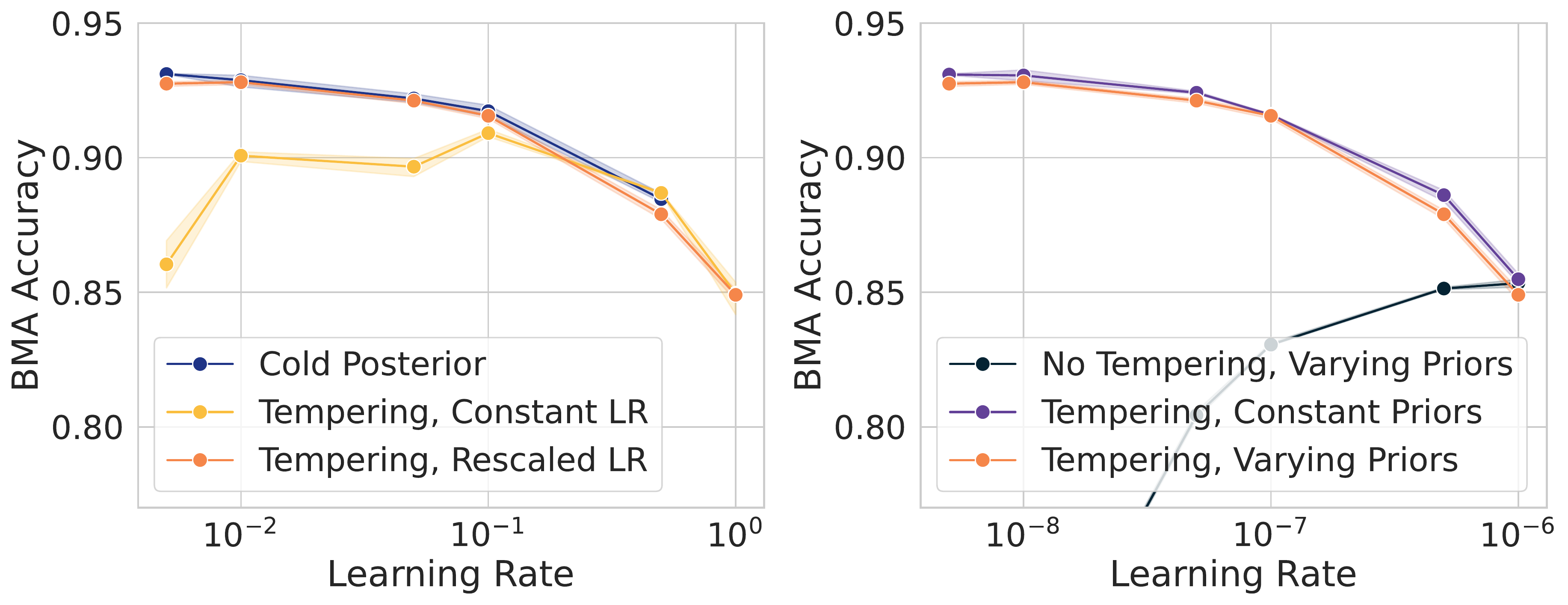}
\caption{We empirically verify that a tempered likelihood combined with prior rescaling is sufficient to reproduce the cold posterior effect, as discussed in \cref{sec:conn_cold_tempered}.}
\end{figure}
\section{Experimental Details}\label{app:experimental_details}

We use PyTorch for all experiments \citep{paszke2019pytorch}.
For the Gaussian process experiments, we used GPyTorch \citep{gardner2018gpytorch} for all but the Laplace bernoulli classifier where we used scikit-learn \citep{pedregosa2011scikit}.

We used $\mathcal{N}(0,1)$ priors unless otherwise stated and ran cyclical learning rate SGLD for $1000$ epochs with an initial learning rate of $10^{-6}$ and a momentum term of $0.99$.
We used ResNet-$18$ \citep{he2016identity} architectures unless otherwise stated and used the CIFAR-$10$\footnote{\url{https://www.cs.toronto.edu/~kriz/cifar.html} dataset}.

For experiment management, we used wandb \citep{wandb} to manage experiments.
Experiments were performed on a mix of Nvidia RTX GPUs on internal servers and clusters as well as some RTX GPUs on AWS; we also used some AMD mi50 GPUs on an internal cluster.
Each SGLD trial took roughly $10$ hours to run, with a total of 3 trials per experiment.
Unless otherwise mentioned, all experiments used $3$ random seeds and we plotted the mean with shading via one standard deviation.

\subsection{Synthetic problem}
\label{sec:app_exp_synthetic}

We present the code used to generate the data for the synthetic problem considered in \cref{sec:exp_synthetic} in \cref{code:synthdata}.

For all models we use an iid Gaussian prior $\mathcal N(0, 0.3^2)$ over the parameters.
We use HMC with a step size $3 \cdot 10^{-6}$ and the trajectory length is $\pi \cdot 0.3 / 2$, amounting to $150\cdot 10^3$ leapfrog steps per iteration, following the advice in \citet{izmailov2021bayesian}.
We run HMC for 100 iterations, discarding the first 10 samples as burn-in.

For data augmentation, we generate $K = 4$ augmentations of each datapoint.
We run HMC on the distribution in \cref{eq:implied_posterior}.
For the tempered likelihood, we use $T=0.1$, and for the noisy Dirichlet model we use $\alpha_\epsilon=10^{-5}$.

\begin{lstlisting}[language=Python, caption=Data generation for the synthetic problem, label={code:synthdata}]
import numpy as np
    
def twospirals(n_samples, noise=.5, random_state=920):
    """
     Returns the two spirals dataset.
    """
    onp.random.seed(random_state)
    
    n = np.sqrt(np.random.rand(n_samples,1)) * 600 * (2*np.pi)/360
    d1x = -1.5*np.cos(n)*n + np.random.randn(n_samples,1) * noise
    d1y =  1.5*np.sin(n)*n + np.random.randn(n_samples,1) * noise
    return (np.vstack((np.hstack((d1x,d1y)), np.hstack((-d1x,-d1y)))),
            np.hstack((np.zeros(n_samples), np.ones(n_samples))))

x, y = twospirals(n_samples=200, noise=0.6, random_state=920)

label_mask = (np.linalg.norm(x, axis=-1) > 13)
y[label_mask] = 1 - y[label_mask]

mask = np.logical_and(x[:, 0] < 0, x[:, 1] < 0)
x, y = x[mask], y[mask]
x_aug = np.concatenate([x, -x, x * onp.array([[1., -1]]), x * onp.array([[-1., 1]])])
y_aug = np.concatenate([y, y, y, y])
\end{lstlisting}

\end{document}